\DeclareMathOperator*{\res}{Res}
\newtheorem{prop}{\noindent \textbf{Proposition}}
\newtheorem{lemma}{\noindent \textbf{Lemma}}
\newtheorem*{remark}{\noindent \textbf{Remarks}}
\begin{document}
	
	\title{\LARGE \bf
		Necessary and Sufficient Conditions for Passivity of\\ Velocity-Sourced Impedance Control of Series Elastic Actuators
	}
	
	\author{Fatih Emre Tosun \and Volkan Patoglu
		\thanks{F. E. Tosun and V. Patoglu are with Faculty of Engineering and Natural Sciences, Sabanc{\i}
			University, \.Istanbul, Turkey {\tt
				\{femretosun,vpatoglu\}@sabanciuniv.edu} }}
	
	\author{
		Fatih Emre Tosun,~\IEEEmembership{Student Member,~IEEE}
		and~Volkan Patoglu,~\IEEEmembership{Member,~IEEE}
		\thanks{F. E. Tosun and V. Patoglu are with the Faculty of Engineering and Natural Sciences, Sabanc{\i}
			University, \.Istanbul, Turkey.}%
		\thanks{Manuscript received February 2, 2019.}
	}


	\maketitle
	
	\begin{abstract}
		Series Elastic Actuation (SEA) has become prevalent in applications involving physical human-robot interaction as it provides considerable advantages over traditional stiff actuators in terms of stability robustness and fidelity of force control. Several impedance control architectures have been proposed for SEA. Among these alternatives, the cascaded controller with an inner-most velocity loop, an intermediate torque loop and an outer-most impedance loop is particularly favoured for its simplicity, robustness, and performance. In this paper, we derive the \emph{necessary and sufficient conditions} to ensure the passivity of this cascade-controller architecture for rendering two most common virtual impedance models. Based on the newly established passivity conditions, we provide non-conservative design guidelines to haptically display a null impedance and a pure spring while ensuring the passivity of interaction. We also demonstrate the importance of including physical damping in the actuator model during derivation of passivity conditions, when integral controllers are utilized. In particular, we show the adversary effect of physical damping on system passivity.
	\end{abstract}
	
	\begin{IEEEkeywords}
		Compliance and Impedance Control, Haptics and Haptic Interfaces, Physical Human-Robot Interaction, Series Elastic Actuation
	\end{IEEEkeywords}
	
	\section{Introduction}
	
	Ensuring natural and safe physical human-robot interactions (pHRI) is an active research area, since such interactions  form the basis of  successful applications in many areas, including service, surgical, assistive, and rehabilitation robotics. Safety of interaction requires the impedance characteristics of the robot at the interaction port to be controlled precisely~\cite{ColgatePhD}. Along these lines, many robot designs and several impedance control~\cite{Hogan1985a} schemes have been proposed.
	
	Many successful applications rely on open-loop force/impedance control to avoid the use of force sensors. In these approaches, the motor torques/impedances are directly mapped to the end-effector forces/impedance. The performance of open-loop control approaches relies on the transparency of the mechanical design. In particular, the mechanical design of the robot needs to have high stiffness, low inertia, and high passive backdrivability to ensure good performance by minimizing parasitic forces. Optimization techniques exist to help design robots with high transparency~\cite{hayward,ramazan}. However, the design of highly transparent robots become quite challenging, even infeasible, as high force/impedance levels are necessitated, since backdrivable high torque/power density actuators are not available.
	
	Many robotic systems rely on closed-loop force control to compensate for parasitic forces originating from  the mechanical design. However, the performance of closed-loop force controllers suffers from an inherent limitation imposed by the non-collocation of sensors and actuators.
	In particular, given that a force sensor needs to be attached to the interaction port,  there always exists inevitable compliance between the actuators and the force sensor. This non-collocation results in a fundamental performance limitation for the controller, by  introducing an upper bound on the loop gain of the closed-loop force-controlled system. Above this limit, the closed-loop system becomes unstable~\cite{Chae1987,Eppinger1987}.
	
	When traditional force sensors with  high stiffness are employed in the control loop, the stable loop gain of the system is mostly allocated for the force sensing element, and this significantly limits the upper bound available for the controller gains to achieve fast response and good robustness properties from the controlled system. Consequently, such force control architectures typically rely on high quality actuators/power transmission elements to avoid hard-to-model parasitic effects, such as friction and torque ripple, since these parasitic forces may not be effectively compensated by robust controllers based on aggressive force-feedback controller gains.
	
	Series elastic actuation (SEA) trades-off force-control bandwidth for force/impedance rendering fidelity, by introducing highly compliant force sensing elements into the closed-loop force control architecture~\cite{HowardPhD,Pratt1995}. By decreasing the force sensor stiffness, it allows higher force controller gains to be utilized for responsive and robust force-controllers. SEA can effectively mask the inertia of the actuator side from the interaction port, featuring  favorable output impedance characteristics that is safe for human interaction over the entire frequency spectrum. In particular, by modulating its output impedance to a desired level, SEA can ensure \emph{active} backdrivability, within the force control bandwidth of the device, through closed-loop impedance control of high power density actuators. For the frequencies over its control bandwidth, the apparent impedance of the system is limited by the inherent compliance of the force sensing element that acts as a physical filter against impacts, impulsive loads, and high frequency disturbances~\cite{HowardPhD,Pratt1995,Robinson1999,Sensinger2006b}.
	
	SEA is also preferred, since the cost of SEA robotic devices can be made significantly (about an order of magnitude) lower than traditional force sensor based implementations, as successfully demonstrated by the commercial Baxter robot~\cite{Baxter}. In particular, since the orders of magnitude more compliant force sensing elements in SEA experience significantly larger deflections with respect to commercial force sensors, regular position sensors, such as optical encoders, can be employed to measure these deflections, enabling the implementation of low-cost digital force sensing elements that do not require signal conditioning. Furthermore, since the robustness properties of the force controllers enable SEA to effectively compensate for parasitic forces, lower cost components can be utilized as actuators/power transmission elements in the implementation of SEA.  To date, a large number of SEA designs have been developed for a wide range of applications~\cite{Pratt1995,Robinson1999,Sensinger2006,Veneman2006,Khatib2008,Kyoungchul2012,Sarac2014,Erdogan2016,Otaran2016,Munawar2016,Woo2017,Caliskan2018,Senturk2018,Gillespie2014}.
	
	The main disadvantage of SEA is the significantly decreased closed-loop bandwidth caused by the increase of the sensor compliance~\cite{Pratt1995}. The determination of appropriate stiffness of the compliant element is an important aspect of SEA designs, where a compromise solution needs to be reached  between force control fidelity and closed-loop control bandwidth. In particular, higher compliance can increase the force sensing resolution, while higher stiffness can improve the control bandwidth of the system. Possible oscillations of the end-effector, especially when SEA is not in contact, and the potential energy storage of the elastic element may pose as other challenges of SEA designs, depending on the application. 	
	
	SEA is a multi-domain concept whose performance synergistically depends on the design of both the plant and the controller~\cite{Kamadan2017,Kamadan2018}. The high performance controller design for SEA to be used in pHRI has further challenges, since ensuring safety of interactions is an imperative design requirement that dominates the design process. In particular, the safety of interaction requires coupled stability of the controlled SEA together with the human operator. However, the presence of a human operator in the control loop significantly complicates the stability analysis, since a comprehensive model for human dynamics is not available. Particularly, human dynamics is nonlinear, time and configuration-dependent. Contact interactions with the environment pose similar challenges, since the impedance of the contact environment is, in general, nonlinear and uncertain.
	
	The coupled stability analysis of the pHRI system in the absence of human and environment model is commonly conducted using the frequency domain passivity framework~\cite{Colgate1988,Colgate1997}. This approach assumes that human operator behaves as a passive network element in the closed-loop system and does not intentionally generate energy to destabilize the system~\cite{Hogan1989}. Similarly, non-animated environments are passive. Therefore, coupled stability of the overall system can be concluded, if the closed-loop SEA with its controller can be guaranteed to be passive~\cite{Colgate_human}. Passivity framework is advantageous as it provides robust stability for a large range of human and environment models.  However, non-passive systems are not always unstable~\cite{Buerger2001} and the passivity is a relatively conservative condition that  imposes strict constraints on the controller gains to degrade the system performance.
	
	It is well-established that ensuring passivity adversely affects the transparency~\cite{Lawrence}, and this trade-off brings a challenge in the design of high-performance controllers that can ensure coupled stability. The trade-off between stability and transparency~\cite{hulin,zaad2,griffiths}, as well as the factors affecting the transparency have been investigated  in the literature~\cite{hirche,Peer,tavakoli}. While keeping coupled stability intact, a controller allowing better compromise between transparency and robust stability is desirable~\cite{Peer}.
	
	In this paper, we consider the cascaded control architecture for SEA, with an inner-most velocity loop, an intermediate torque loop and an outer-most impedance loop, and derive \emph{necessary and sufficient conditions} to ensure the passivity of this controller for haptic rendering of a null impedance and a pure spring. Our results rigorously extend the earlier reported sufficiency conditions on passivity of SEA and provide the least conservative range for passivity renderable impedances. We also demonstrate the importance of including physical damping in the actuator model during derivation of passivity conditions, when integral controllers are utilized.
	
	\begin{figure*}[b]
		\center
		\resizebox{1\textwidth}{!}{\includegraphics{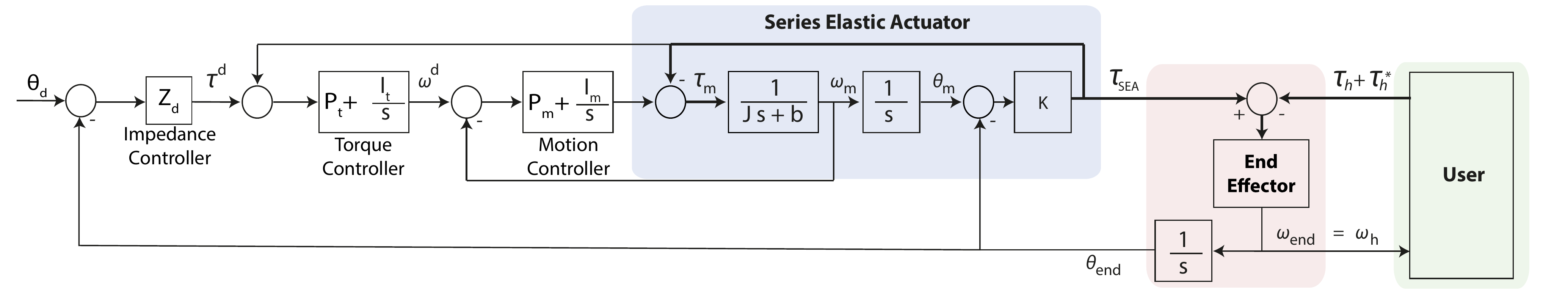}}
		\vspace{-1.5\baselineskip}
		\caption{Cascaded control of a series elastic actuator}
		\label{SEA_diagram}
	\end{figure*}
	
	The rest of the paper is organized as follows. Section~II reviews the related work and emphasizes the contributions of this paper in comparison to the related works. Section~III provides the system model considered in this study and lists the underlying assumptions together with their justification. Section~IV derives the necessary and sufficient conditions for passivity while rendering null impedance and pure spring. Rendering performance with respect to controller gains is studied systematically and design guidelines are provided in Section~V. Section~VI provides a discussion of the results by elaborating on the important differences with respect to earlier works. Finally, Section~VII concludes the paper.
	
	\section{Related Work}

The notion of intentionally introduced compliance between the actuator and the end effector for force controlled robotic joints has been first proposed in~\cite{HowardPhD}. 
Later, the term ``series elastic actuator"~(SEA) was coined for this force control approach~\cite{Pratt1995}. The SEA controller in~\cite{Pratt1995} is based on a single force-control loop, where the actuator is torque controlled based on the deflection feedback from the compliant element.  Similarly, a PID controller with feed-forward acceleration terms to compensate the actuator inertia has been proposed in~\cite{Robinson1999}. These early strategies rely on low-pass filters instead of pure integrators to preserve passivity, at the expense of allowing steady state errors under constant disturbances.
	
	Other control techniques for SEA include disturbance observer based force controllers~\cite{Kyoungchul2012,Paine2014} and controllers based on high order derivatives~\cite{Abe2012}. While linear models are most widely adapted for force/torque control of SEAs, there also exist some nonlinear control strategies~\cite{nonlinearSEA,nonlinearSEA2,nonlinearSEA3,nonlinearSEA4}.
	
	A fundamentally different architecture based on cascaded control loops has first been proposed  in~\cite{HowardPhD} and later rediscovered in~\cite{Pratt2004,Wyeth2008}. In this approach, an inner-loop controls the velocity of the actuator, rendering the system into an ``ideal" motion source, while an outer-loop controls the interaction force based on the deflection feedback from the compliant element. Wyeth called this approach \emph{velocity-sourced SEA}~\cite{Wyeth2008}, emphasizing that most of the earlier work considered the motor as a torque source rather than a velocity source. This particular strategy allows for the use of integrators; thus the closed-loop controlled system can effectively counteract constant disturbances at the steady state.  This architecture also allows for utilization of well-established robust motion controllers for the inner-loop to counteract parasitic effects of friction and stiction. Furthermore, the controller can be tuned easily without the need for precise actuator dynamics.  The cascaded control approach has been widely utilized in various applications~\cite{Veneman2006,Tokatli2010,Sarac2014,Erdogan2016,Otaran2016,Munawar2016,Caliskan2018,Senturk2018}.
	
	Using the cascaded control architecture, Vallery \textit{et~al.} derived and experimentally verified \emph{sufficient conditions} to ensure passivity of the impedance rendering, for the case of zero reference torque~\cite{Vallery2007}. They have suggested simple yet quite conservative guidelines: select a proportional velocity gain that is greater than the motor inertia, and select integrator gains that are less than the half of the corresponding proportional gains. In their later work, Vallery \textit{et al.} conducted a theoretical analysis and an experimental study for pure spring rendering~\cite{Vallery2008}. In this work, it has been proven that, for the cascaded control architecture, the passively renderable virtual stiffness is bounded by the stiffness of physical spring employed in the SEA.
	
	For a variety of viscoelastic virtual impedance models, Tagliamonte \textit{et~al.} performed a theoretical analysis using the cascaded control architecture, but also including the damping coefficient in the actuator dynamics~\cite{Tagliamonte2014}. In this work, they have proposed less conservative \emph{sufficient conditions} to ensure passivity with properly selected controller gains, for the cases of null impedance and pure spring rendering. They have also demonstrated that the Voigt model, that is, linear spring and damping elements in parallel connection, cannot be passively rendered using the cascaded control architecture.
	
	Recently, Fiorini \textit{et~al.} surveyed  different impedance and admittance control architectures for SEA and summarized \emph{sufficient conditions} for passive impedance rendering with basic impedance control, velocity-sourced impedance control, collocated admittance control and collocated impedance control architectures \cite{calanca2017}. This study concludes that similar bounds on passively renderable impedances exist for all four control architectures and these limits can be extended, if ideal acceleration feedback can be used to predict and cancel out the influence of load dynamics. Noise and bandwidth restrictions of acceleration signals and potential overestimation of feed-forward signals resulting in feedback inversion are important practical challenges that have limited the adaptation of the acceleration-based control approach since initially proposed in~\cite{Pratt1995,Robinson1999}.
	
	This paper builds upon earlier works on passivity of velocity-sourced impedance control of series elastic actuators~\cite{Vallery2007,Vallery2008,Tagliamonte2014} and extends their results by providing the \emph{necessary and sufficient conditions} to ensure passive rendering of null impedance and pure springs. Our results not only provide rigourous sufficiency proofs, but also relax the earlier established bounds by extending the range of impedances that can be passively rendered via cascaded control architecture. Based on the newly established necessary and sufficient conditions, design guidelines are provided to select controller gains to reach optimal performance while maintaining passivity.
	
	Furthermore, our results prove the necessity of a second bound on the integral gains due to existence of physical damping in the system. This bound has been overlooked in the literature~\cite{Vallery2007,Vallery2008}, as it is counter-intuitive for additional dissipation to result in more strict conditions on controller gains. However, this bound is crucial in practice, as it is imposed due to inevitable physical damping of the actuator; hence, cannot be safely neglected, if integral controllers are used in both inner motion and intermediate torque control loops. We also remark that the damping term counterintuitively reduces the Z-width of the system, that is, the dynamic range of passively renderable impedances, as also reported in~\cite{Tagliamonte2014}.

	\section{System Description}
	
	Figure~\ref{SEA_diagram} depicts the block diagram of velocity-sourced impedance control for SEA. In particular, the cascaded controller is implemented with an inner motion control loop to render the system into an ideal motion source, and the outer force/torque control loop generates references for the motion control loop such that the spring deflection is kept at the desired level to match the reference force. The interaction torque is measured through the linear spring troque that is proportional to the difference between motor position $\theta_{m}$ and end-effector position $\theta_{end}$.
	
	The dynamics of the SEA model consist of actuator inertia~$J$, viscous friction~$b$, and the linear spring constant~$K$. PI controllers are employed for both velocity and torque control loops. At the outermost loop, an impedance controller is employed to generate references to the torque controller depending on the desired impedance $Z^{d}$ to be displayed around the equilibrium position $\theta^{d}$.
	
	Some simplifying assumptions are considered while developing the SEA model and its control architecture, as in~\cite{Tagliamonte2014}. These assumptions include:
	\begin{itemize}
		\item To develop a linear time-invariant (LTI) model, nonlinear effects like stiction, backlash and motor saturation are neglected. In the literature, it has been demonstrated that the cascaded force-velocity control scheme can effectively overcome the problems of stiction and backlash~\cite{Wyeth2008,Sensinger2006}. If the motor is operated within its linear range, then the other nonlinear effects, like motor saturation, also vanish.
		\item The overall inertia and damping of the SEA are considered to be on the motor side. The inertia of the load is not included in the analysis, since the load inertia does not contribute to the passivity conditions.
		\item Electrical dynamics of the system is approximated based on the commonly employed assumption that electrical time constant of the system is orders of magnitude faster than the mechanical time constant.
		\item It is assumed that motor velocity signal is available with a negligible delay. This assumption is realistic for electronically commuted motors furnished with Hall effect sensors. Furthermore, for motors furnished with high-resolution encoders, differentiation filters running at high sampling frequencies (commonly on hardware) can be employed to result in real-time estimation of velocity signals with very small delay, within the bandwidth of interest.
		\item Without loss of generality, for simplicity of analysis, zero reference trajectory is assumed for the equilibrium position (i.e. $\theta^{d}=0$) and transmission ratios are set to unity.
	\end{itemize}
	
	Conventionally, the output impedance $Z_{out}$ of the closed loop system is defined at its output port as the relationship between the conjugate variables $\omega_{end}$(s) and $\tau_{SEA}$(s) as
	\begin{equation}
	Z_{out} = - \frac{\tau_{SEA}(s)}{\omega_{end}(s)}= - \frac{\tau_{SEA}(s)}{s\theta_{end}(s)}
	\end{equation}
	
	The following analysis is performed based on Eqn.~(1) as it defines the relationship at the interaction port of the human/environment and the end-effector of SEA.
	
	\section{Passivity Analysis}
	
	An LTI and single-input single-output (SISO) system, whose transfer function is denoted as H(s) is passive if and only if the following conditions hold~\cite{ColgatePhD}:
	\begin{itemize}
		\item[(i)] $H(s)$ must have all its poles in the open left half plane.
		\item[(ii)] $Re\{H(jw)\} \geq 0$ for all $w  \in  (-\infty, \infty)$ for which $jw$ is not a pole of $H(s)$.
		\item[(iii)] Poles on the imaginary axis are allowed only if they are simple and have positive real residues.
	\end{itemize}

	Condition~(i) implies (isolated) stability, but all three conditions are required to be simultaneously satisfied for passivity.
	
	Condition~(ii) justifies the neglect of the load side inertia of the SEA plant, since it only adds an imaginary term to the frequency response. The necessary and sufficient conditions for the passivity of the system depicted in Figure~1 for positive system parameters and control gains are derived by using this theorem as follows.

	\subsection{Null Impedance Rendering}
	
	Let us first analyse the case of null impedance (i.e $Z_{d}$=0), which also corresponds to the special case where the outer-most impedance loop is cancelled and zero set-point reference signal is fed to the torque controller (i.e $\tau^{d}=0$). This particular case is interesting as it is commonly employed to ensure the active backdrivability of SEA. From Eqn.~(1), the output impedance is expressed as
	\begin{equation}
	Z_{out}^{null}=\frac{K \: s\: (J \: s^2+(P_{m}+b) \: s+I_{m})}{D_{Z}(s)}
	\label{null_impedance}
	\end{equation}
	\noindent where
	\begin{equation}
	D_{Z}(s)=Js^4 + (P_{m}+b) s^3 + (K + \gamma) s^2+ \alpha K s + K I_{m}I_{t}
	\label{char_eq}
	\end{equation}
	\noindent with $\alpha = P_{m} \: I_{t} + P_{t}\: I_{m}$ and
	$\gamma=K\:P_{m}\:P_{t} + I_{m}$.
	
	Let us determine the controller gains that guarantee passivity.  Naturally, the parameters $J$ and $b$ that capture the motor dynamics and the spring constant $K$ are always positive. It is established in classical control theory that if any one of the coefficients of the characteristic equation is non-positive in the presence of at least one positive coefficient, then the system is unstable~\cite{ogata}. Along these lines, we also assume that all controller gains are selected as positive. This selection satisfies the necessary condition for the stability of the system.
	
	The method of Hurwitz determinants or Routh's stability criterion can be used to assess the stability of a system, which is the first condition for it to be passive. The Routh array of a fourth order system with a characteristic equation of the form $a_{0}s^4+a_{1}s^3+a_{2}s^3+a_{3}s+a_{4}$  reads as
	
	\begin{table}[H]
		\centering
		$\begin{array}{ccc}
		a_0 & \hspace{5mm} a_2 & \hspace{5mm} a_4 \\
		a_1 & \hspace{5mm} a_3 & \hspace{5mm} 0 \\
		(a_1 a_2 - a_0 a_3)/a_1 & \hspace{5mm} a_4 & \hspace{5mm} 0 \\
		(a_1 a_2 a_3 - a_1^2 a_4 - a_0 a_3^2) / (a_1 a_2 - a_0 a_3) & \hspace{5mm} 0 & \hspace{5mm} 0  \\
		a_4 & \hspace{5mm} 0 &  \hspace{5mm} 0
		\end{array}$
		\vspace{-.5\baselineskip}
		
	\end{table}
	\noindent It follows from the Routh array that the following two inequalities need to be satisfied to ensure stability.
	\begin{eqnarray}
	a_{1}a_{2}-a_{0}a_{3}>0 \label{eqn_r1} \\
	a_{1}a_{2}a_{3}-a_{0}a_{3}^2-a_{4}a_{1}^2>0 \label{eqn_r2}
	\end{eqnarray}
	
	Note that if Eqn.~(\ref{eqn_r2}) is satisfied, then Eqn.~(\ref{eqn_r1}) is also met, as can be proven by multiplying Eqn.~(\ref{eqn_r1}) with $a_{3}$, and noting that Eqn.~(\ref{eqn_r2}) ensures that $a_{1}\:a_{2}\:a_{3}-a_{0} \: a_{3}^2 > a_{4} \: a_{1}^2 >\ 0$. Hence, if we define a variable as $\xi:=a_{1}a_{2}a_{3}-a_{0}a_{3}^2-a_{4}a_{1}^2$ the system is stable if and only if $\xi>0$. The value of $\xi$ in terms of our system parameters reads as
	{\small
		\begin{equation}
		\xi=\alpha K \: (P_{m}+b)(K +\gamma)- K\: I_{m} \: I_{t} \: (P_{m}+b)^2 -J\: K^2 \: \alpha^2 \label{eqn:stability}
		\end{equation}}
	The inequality $\xi>0$ represents Condition~(i) for passivity. As for Condition~(ii), we have to assess the positive-realness of $Z_{out}^{null}(jw)$. It is relatively involved to examine the positive-realness of a complex fraction directly. Along these lines, we use a polynomial that provides us with the same information about the sign of the real part of $Z_{out}^{null}(jw)$. This polynomial approach has been used in several earlier works, including~\cite{Dylan2017}. For completeness of the presentation, below we repeat this well-established result from the literature.
	
	\begin{prop} {\rm
			For ease of notation, denote the frequency response of a SISO LTI system as $H(jw)= {\rm num}(jw)/{\rm den}(jw)$. Then, ${\rm sign}(Re\{ H(jw) \})= {\rm sign}(P(w))$ for any value of $w$ for which ${\rm den}(jw) \neq 0$, where ${\rm sign}(\cdot)$ represents the signum function and $P(w)$ is a polynomial defined as $P(w)=Re\{{\rm num}(jw) {\rm den}(-jw)\}=\sum_{i}^{} d_{i} \: w^i$.}
	\end{prop}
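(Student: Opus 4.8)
The plan is to rationalize the complex fraction by multiplying numerator and denominator by the complex conjugate of the denominator, and then simply read off the real part. The key observation is that, because $\mathrm{num}(s)$ and $\mathrm{den}(s)$ are polynomials with real coefficients, substituting $s=jw$ and then conjugating is the same as substituting $s=-jw$; that is, $\overline{\mathrm{den}(jw)}=\mathrm{den}(-jw)$. Hence
\[
H(jw)=\frac{\mathrm{num}(jw)}{\mathrm{den}(jw)}
=\frac{\mathrm{num}(jw)\,\mathrm{den}(-jw)}{\mathrm{den}(jw)\,\mathrm{den}(-jw)}
=\frac{\mathrm{num}(jw)\,\mathrm{den}(-jw)}{|\mathrm{den}(jw)|^{2}}.
\]

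First I would note that the rationalized denominator $|\mathrm{den}(jw)|^{2}$ is real and strictly positive for every $w$ with $\mathrm{den}(jw)\neq 0$. Taking real parts of both sides then gives $Re\{H(jw)\}=Re\{\mathrm{num}(jw)\,\mathrm{den}(-jw)\}/|\mathrm{den}(jw)|^{2}=P(w)/|\mathrm{den}(jw)|^{2}$, and since multiplication by a positive scalar leaves the sign unchanged, $\mathrm{sign}(Re\{H(jw)\})=\mathrm{sign}(P(w))$, which is exactly the claim.

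Second, I would verify that $P(w)$ is genuinely a real polynomial in $w$, so that the statement is well posed. Writing $\mathrm{num}(s)=\sum_{k}a_{k}s^{k}$ and $\mathrm{den}(s)=\sum_{l}b_{l}s^{l}$ with real $a_{k}$ and $b_{l}$, one has $\mathrm{num}(jw)\,\mathrm{den}(-jw)=\sum_{k,l}a_{k}b_{l}(-1)^{l}j^{\,k+l}w^{k+l}$; taking the real part keeps exactly the terms with $k+l$ even, so indeed $P(w)=\sum_{i}d_{i}w^{i}$ with real coefficients $d_{i}$, only even powers of $w$ actually appearing.

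There is no genuinely hard step here: the whole argument is a one-line algebraic manipulation. The only point that needs care is the identity $\overline{\mathrm{den}(jw)}=\mathrm{den}(-jw)$, which is precisely where the standing assumption that all transfer functions considered in the paper have real coefficients is used. The result is recorded mainly because it is the computational device used repeatedly in the sequel to replace the positive-realness test of Condition~(ii) by a sign analysis of the single polynomial $P(w)$.
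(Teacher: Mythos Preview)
Your argument is correct and follows exactly the same route as the paper: rationalize $H(jw)$ by multiplying top and bottom by $\mathrm{den}(-jw)=\overline{\mathrm{den}(jw)}$, observe that the resulting denominator $|\mathrm{den}(jw)|^{2}$ is real and strictly positive whenever $\mathrm{den}(jw)\neq 0$, and conclude that the sign of the real part is carried by $P(w)$. Your additional remarks making explicit the identity $\overline{\mathrm{den}(jw)}=\mathrm{den}(-jw)$ and the fact that $P(w)$ is a real polynomial (with only even powers) are helpful elaborations but do not change the approach.
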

	
	\begin{proof}
		Multiply numerator and denominator of $H(jw)$ with the complex conjugate of the denominator as	
		\begin{equation}
		H(jw)=\frac{{\rm num}(jw)}{{\rm den}(jw)} = \frac{{\rm num}(jw) {\rm den}(-jw)}{{\rm den}(jw) {\rm den}(-jw)}
		\nonumber
		\end{equation}
		Since the  denominator of the resulting fraction is never negative and is zero only when ${\rm den}(jw)$ is zero, we conclude that the proposition holds.
	\end{proof}
	
	Consequently, $P(w)\geq0$ is a necessary and sufficient test to ensure Condition~(ii) for passivity.
	\noindent
	For the system described by Eqn.~(\ref{null_impedance}), $P(w)$ evaluates to
	\begin{equation}
	P(w)=d_{2}w^2+d_{4}w^4
	\label{null_passivity}
	\end{equation}
	\noindent where the coefficients are defined as
	\begin{eqnarray}
	d_{2} &=& K^2(P_{t}I_{m}^2-bI_{t}I_{m}) \label{eqn:d2}\\
	d_{4} &=& K^2(P_{m}+b+P_{t}P_{m}^2+bP_{t}P_{m}-J\alpha) \label{eqn:d4}
	\end{eqnarray}
	It will be proven that $d_{2}\geq0 \wedge d_{4}\geq0$ is a necessary and sufficient condition to ensure $P(w)\geq0$ for $\forall w \in \mathds{R}$
	
	\begin{proof}[\emph{Proof.} Sufficiency]
		Since there are only even powers of $w$ in $P(w)$, the image of $P(w)$ is non-negative if all coefficients are also non-negative.
	\end{proof}
	
	\begin{proof}[\emph{Proof.} Necessity]
		Rearrange Eqn.~(\ref{null_passivity}) as $P(w)=w^2 \: (d_{2}+d_{4} \: w^2)$. Then, $P(w)\geq0$ for $w \in (-\infty, \infty)$ if and only if $d_{2}+d_{4} \: w^2\geq0$. The roots to this simple quadratic expression (i.e., $d_{2}+d_{4} \: w^2$) are equal to $\pm \sqrt{-d_{2}/d_{4}}$.
		
		If the signs of $d_{2}$ and $d_{4}$ agree, there is no real root to this expression, meaning that its graph never crosses the horizontal axis. Thus, if $d_{2}$ and $d_{4}$ are positive, then the graph has to lie above the abscissa for all $w$ values. On the other hand, if the coefficients have opposite sign, there will be two real roots forcing the parabola to cross the abscissa and go below zero. In this case, $P(w)$ is negative for $w \in (-\sqrt {-d_{2}/d_{4}} , \sqrt {-d_{2}/d_{4}})$.
		
		Finally, in the extreme case where either coefficient is zero the other coefficient must be greater than or equal to zero for $P(w)$ to be non-negative. 
	\end{proof}
	
	Thus, $P(w)\geq0 \iff d_{2}\geq0 \wedge d_{4}\geq0$. Consequently, the \emph{necessary and sufficient conditions} for the passivity of the system whose closed-loop impedance is characterized by Eqn.~(\ref{null_impedance}) can be expressed as follows:
	\vspace{-1mm}
	{\small	\begin{eqnarray}
		\xi=K[\alpha (P_{m}+b)(K+\gamma)- I_{m}I_{t}(P_{m}+b)^2 -J K\alpha^2]> 0	\label {ineqn:stability} \\
		d_{2}=K^2\:[I_{m}(P_{t}I_{m}-bI_{t})] \geq 0 \label{ineqn:d2}\\
		d_{4}=K^2\:[(P_{m}+b)(1+P_{m}\:P_{t})-J\alpha)] \geq 0 \label{ineqn:d4}
		\end{eqnarray}}	
	\begin{prop} {\rm
			The necessary and sufficient conditions to passively render zero impedance (or equivalently zero force/torque) for the cascaded controlled SEA shown in Figure~\ref{SEA_diagram} with positive control gains are as follows:
			
			{\small \begin{gather}
				\left[ J< \frac{(P_{m}+b)(1+P_{m}\:P_{t})} {P_{m}\:I_{t}+P_{t}\:I_{m}} \wedge b\leq \frac{P_{t}\:I_{m}}{I_{t}} \right] \label{cond:null1} \\
				\lor \nonumber \\
				\left[ J\leq \frac{(P_{m}+b)(1+P_{m}\:P_{t})} {P_{m}\:I_{t}+P_{t}\:I_{m}} \wedge b< \frac{P_{t}\:I_{m}}{I_{t}}\right] \label{cond:null2}
				\end{gather} }}
	\end{prop}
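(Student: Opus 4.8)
The plan is to begin from the scalar characterisation already obtained: by Eqns.~(\ref{ineqn:stability})--(\ref{ineqn:d4}), the closed-loop impedance $Z_{out}^{null}$ is passive if and only if $\xi>0$, $d_{2}\geq0$ and $d_{4}\geq0$ hold simultaneously. Two of these convert immediately. Since $K,I_{m},I_{t}>0$, the inequality $d_{2}\geq0$ is equivalent to $P_{t}I_{m}-bI_{t}\geq0$, that is $b\leq P_{t}I_{m}/I_{t}$; and since $K,\alpha>0$, the inequality $d_{4}\geq0$ is equivalent to $(P_{m}+b)(1+P_{m}P_{t})-J\alpha\geq0$, that is $J\leq(P_{m}+b)(1+P_{m}P_{t})/\alpha$. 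The substantive part is to show that, once $d_{2}\geq0$ and $d_{4}\geq0$ are imposed, the stability inequality $\xi>0$ becomes redundant except that it excludes the single configuration in which $d_{2}$ and $d_{4}$ vanish together.

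The key step is to establish the algebraic identity
\[
\xi \;=\; \alpha\, d_{4} \;+\; \frac{P_{m}+b}{K}\, d_{2}.
\]
I would obtain it by substituting $\gamma=KP_{m}P_{t}+I_{m}$, so that $K+\gamma=K(1+P_{m}P_{t})+I_{m}$, into the bracketed expression of Eqn.~(\ref{ineqn:stability}), expanding, and regrouping so as to factor out $(P_{m}+b)(1+P_{m}P_{t})-J\alpha$ (which equals $d_{4}/K^{2}$) and $\alpha-I_{t}(P_{m}+b)=P_{t}I_{m}-bI_{t}$ (which, multiplied by $I_{m}$, equals $d_{2}/K^{2}$); the leftover single power of $K$ from the outer factor of $\xi$ then produces the stated coefficients. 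Because $\alpha>0$, $P_{m}+b>0$ and $K>0$, this identity settles the claim at once: if $d_{2}\geq0$ and $d_{4}\geq0$ then $\xi\geq0$, with equality precisely when $d_{2}=d_{4}=0$; and conversely $\xi>0$ together with $d_{2},d_{4}\geq0$ rules out $(d_{2},d_{4})=(0,0)$. Hence the triple $\{\xi>0,\;d_{2}\geq0,\;d_{4}\geq0\}$ is equivalent to $\{d_{2}\geq0,\;d_{4}\geq0,\;\lnot(d_{2}=0\wedge d_{4}=0)\}$.

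It then remains only to unfold this equivalent triple into the stated form. Substituting the two explicit bounds for $d_{2}\geq0$ and $d_{4}\geq0$, the conjunction ``$d_{2}\geq0$ and $d_{4}\geq0$ and not both are zero'' reads ``$b\leq P_{t}I_{m}/I_{t}$ and $J\leq(P_{m}+b)(1+P_{m}P_{t})/\alpha$ and at least one of these holds strictly'', and splitting the ``at least one is strict'' clause according to which of the two variables attains its bound yields exactly the disjunction of Eqns.~(\ref{cond:null1}) and~(\ref{cond:null2}), after writing $\alpha=P_{m}I_{t}+P_{t}I_{m}$. I do not expect a conceptual obstacle here; the only delicate point is the bookkeeping of the powers of $K$ (one factor in $\xi$ against two in $d_{2}$ and $d_{4}$) and correctly spotting the factorisation $\alpha-I_{t}(P_{m}+b)=P_{t}I_{m}-bI_{t}$, so I would cross-check the identity on a sample numerical parameter set as a safeguard against a sign or scaling slip.
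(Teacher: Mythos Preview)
Your proposal is correct, and the underlying algebra is in fact identical to the paper's. The paper does not write down your identity $\xi=\alpha\,d_{4}+\tfrac{P_{m}+b}{K}\,d_{2}$ explicitly; instead it parametrises $J=J_{max}^{null}-\epsilon$ (so that $d_{4}=K^{2}\alpha\,\epsilon$) and substitutes into $\xi$ to obtain $\xi=\epsilon\,K^{2}\alpha^{2}+K\,I_{m}(P_{m}+b)(P_{t}I_{m}-bI_{t})$, then handles the boundary cases $\epsilon>0$ and $\epsilon=0$ separately as Parts~I and~II of Lemma~1. That expression is precisely your identity evaluated at the chosen $J$, so the two arguments coincide term-for-term. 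Your formulation has the advantage that the linear decomposition of $\xi$ into non-negative multiples of $d_{2}$ and $d_{4}$ is stated once and for all, making the exclusion of the single point $d_{2}=d_{4}=0$ immediate without a case split; the paper's version is a bit more concrete in that it phrases everything in terms of the physical bounds $b_{max}$ and $J_{max}^{null}$, which ties directly into the design guidelines that follow.
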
 \label{prop:cond:null}

	In the sequel step-by-step proof is provided.
	
	\smallskip
	\begin{lemma} {\rm
			$(d_{2}\geq0 \wedge d_{4}>0) \lor (d_{2}>0 \wedge d_{4}\geq0) \implies \xi>0$ }
	\end{lemma}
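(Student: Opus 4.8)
The plan is to reduce the claimed implication to an elementary inequality by substituting the definitions of $\xi$, $d_{2}$, $d_{4}$, $\alpha$ and $\gamma$, and then exploiting a cancellation forced by the special form of $\gamma$. First I would pull the positive constant $K$ out of $\xi$ and work with $\xi/K = \alpha(P_{m}+b)(K+\gamma) - I_{m}I_{t}(P_{m}+b)^2 - JK\alpha^2$; since $K>0$, the sign of $\xi$ is the sign of this quantity, so it suffices to show it is positive.

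The hypothesis on $d_{4}$ is used exactly once, to eliminate the inertia term. From $d_{4}=K^2[(P_{m}+b)(1+P_{m}P_{t})-J\alpha]\geq 0$ we get $J\alpha \leq (P_{m}+b)(1+P_{m}P_{t})$, hence $JK\alpha^2 \leq K\alpha(P_{m}+b)(1+P_{m}P_{t})$ (multiplying by $K\alpha>0$), with the inequality strict whenever $d_{4}>0$. Substituting this bound gives
\[
\frac{\xi}{K}\;\geq\;(P_{m}+b)\bigl[\alpha(K+\gamma)-K\alpha(1+P_{m}P_{t})\bigr]-I_{m}I_{t}(P_{m}+b)^2,
\]
strict when $d_{4}>0$. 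Now comes the key algebraic observation: since $\gamma=KP_{m}P_{t}+I_{m}$, one has $K+\gamma=K(1+P_{m}P_{t})+I_{m}$, so the term $K\alpha(1+P_{m}P_{t})$ cancels and the bracket collapses to $\alpha I_{m}$. Factoring $(P_{m}+b)I_{m}$ out of what remains and using $\alpha=P_{m}I_{t}+P_{t}I_{m}$ leaves the residual factor $\alpha-I_{t}(P_{m}+b)=P_{t}I_{m}-bI_{t}$, which by the definition of $d_{2}=K^2 I_{m}(P_{t}I_{m}-bI_{t})$ equals $d_{2}/(K^2 I_{m})$. Hence $\xi\geq(P_{m}+b)\,d_{2}/K$, with strict inequality precisely when $d_{4}>0$.

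To finish, the two disjuncts are handled separately. If $d_{2}\geq 0$ and $d_{4}>0$, the chain above is strict, so $\xi>(P_{m}+b)d_{2}/K\geq 0$. If $d_{2}>0$ and $d_{4}\geq 0$, the chain gives $\xi\geq(P_{m}+b)d_{2}/K>0$, since $P_{m}+b>0$ and $K>0$. Either way $\xi>0$, which is the assertion of the lemma. I do not anticipate a genuine obstacle: once the substitution $K+\gamma=K(1+P_{m}P_{t})+I_{m}$ is spotted, the remaining work is bookkeeping — tracking where strictness enters and keeping the signs of the positive factors $K$, $I_{m}$ and $P_{m}+b$ straight as one divides and multiplies.
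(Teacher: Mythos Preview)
Your argument is correct and is essentially the same as the paper's: the paper parametrizes $J=J_{max}^{null}-\epsilon$ and computes $\xi=\epsilon K^{2}\alpha^{2}+K I_{m}(P_{m}+b)(P_{t}I_{m}-bI_{t})$, which is exactly your inequality $\xi\geq (P_{m}+b)d_{2}/K$ with the slack $\epsilon K^{2}\alpha^{2}$ made explicit. Your handling of strictness via the two disjuncts mirrors the paper's Part~I/Part~II split, and your presentation is slightly more streamlined since you bound $JK\alpha^{2}$ directly rather than substituting for $J$.
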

	
	This statement implies that the Inequality~(\ref{ineqn:stability}) does not add extra restriction to the system of inequalities composed of Eqns.~(\ref{ineqn:stability}),~(\ref{ineqn:d2}) and~(\ref{ineqn:d4}), except that $d_{2}$ and $d_{4}$ cannot be zero simultaneously. In other words, $d_{2}$ and $d_{4}$ are non-negative, but only one of them can be zero at a time. Otherwise, the system is unstable.
	
	The lemma contains two statements that are connected with the logical \textit{or} operator. To facilitate understanding the discussion, the proof will be subdivided into these two parts.
	
	\begin{proof}[\emph{Proof.} Part I: $\boldsymbol{(d_{2}\geq 0 \wedge d_{4}>0)}$]
		Inequality~(\ref{ineqn:d2}) dictates an upper bound on $b$. According to Eqn.~(\ref{ineqn:d2}), the maximum value for the motor damping without violating passivity with the given controller gains can be computed as
		\begin{equation}
		b\leq\frac{P_{t} \: I_{m}}{I_{t}}=b_{max}
		\label{bound_damping}
		\end{equation}
		
		Inequality~(\ref{ineqn:d4}) dictates an upper bound on $J$. According to Eqn.~(\ref{ineqn:d4}), the maximum value for the motor inertia without violating passivity with the given controller gains can be computed as
		\begin{equation}
		J\leq\frac{(P_{m}+b)(1+P_{m}\:P_{t})}{P_{m} \: I_{t} + P_{t}\:I_{m}}=J^{null}_{max}
		\label{bound_inertia}
		\end{equation}
		
		Now, assume  the control gains are selected so that the motor inertia is less than its maximum allowable value. In other words, $J=J^{null}_{max}-\epsilon$ where $0<\epsilon<J^{null}_{max}$. This selection entails $d_{4}>0$. After substituting this value of $J$ in Eqn. (\ref{eqn:stability}), $\xi$ becomes
		\begin{equation}
		\xi=\epsilon \:K^2\: \alpha^2+K\:I_{m}\:(P_{m}+b)\:(P_{t}\:I_{m}-b\:I_{t})
		\end{equation}
		
		Here, only the last term can make the expression negative, but this is avoided when Eqn.~(\ref{bound_damping}) is met. Thus, we conclude that $d_{2}\geq0 \wedge d_{4}>0 \implies \xi>0$.
	\end{proof}
	
	\begin{proof}[\emph{Proof.} Part II: $\boldsymbol{(d_{2}>0 \wedge d_{4}\geq 0)}$]
		Assume the control gains are selected so that the motor inertia takes its maximum allowable value that is, $J=J^{null}_{max}$. Substituting this value of $J$ into Eqn.~(\ref{eqn:stability}) yields the following expression.		
		\begin{equation}
		\xi=K\:I_{m}\:(P_{m}+b)\:(P_{t}\:I_{m}-b\:I_{t})
		\label{eqn:passive_stability}
		\end{equation}
		
		Clearly, $\xi$ is positive if $J\leq J^{null}_{max}$ and $b<b_{max}$. Thus, passivity is ensured when $d_{2}>0$ and $d_{4}\geq0$. However, when $J=J^{null}_{max}$ and $b=b_{max}$ the value of $\xi$ evaluates to zero, which implies instability. Thereby, the system is not stable when $d_{2}=0$ and $d_{4}=0$. 
	\end{proof}

	Consequently, $(d_{2}\geq0 \wedge d_{4}>0) \lor (d_{2}>0 \wedge d_{4}\geq0)$ constitutes the most general solution set that solves Eqns.~(\ref{ineqn:stability}), (\ref{ineqn:d2}) and~(\ref{ineqn:d4}) concurrently, unless negative system parameters or controller gains are allowed. This concludes the proof of Proposition~2.

	\subsection{Pure Spring Rendering}
	
	In this section, we analyze the case where a virtual spring of stiffness $K_{d}$ is displayed. When $Z_{d}$ is set to $K_{d}$, the output impedance $Z_{out}^{spr}$ reads as
	
	{\small \begin{equation}
		Z_{out}^{spr}=	K\: \frac{Js^{4}+(P_{m}+b)s^{3}+\delta s^{2}+\alpha K_{d}s+K_{d} I_{m}I_{t}}{sD_{Z}(s)}\label{pure_spring}
		\end{equation}}%
	\noindent where $\delta =P_{m}\:P_{t}\:K_{d}+I_{m}$. The remaining intermediate parameters are the same as in the case of null impedance. Only a single pole located at the origin is added to the characteristic equation in Eqn.~(\ref{char_eq}). Note that, this does not cause a violation of Condition~(iii), since the pole on the imaginary axis is simple and have a positive residue as shown below.
	
	\begin{equation*}
	\res_{s=0} Z_{out}^{spr}=\lim_{s \to 0} s \: Z_{out}^{spr}=\frac{K_{d}^{2}}{K}>0
	\end{equation*}
	\noindent Therefore, Eqn.~(\ref{ineqn:stability}) for stability must also be adopted here.

	The nonzero coefficients of $P(w)$ for this system are as follows:
	\begin{eqnarray}
	d_{4} &=& K[(K-K_{d})\beta-\alpha K K_{d}] \label{eqn:d4s}\\
	d_{6} &=& K[(K-K_{d})\eta+K(P_{m}+b)]  \label{eqn:d6}
	\end{eqnarray}
	\noindent	where $\beta=P_{t}I_{m}^2-bI_{m}I_{t}$ and	$\eta=P_{m}^2 P_{t}+P_{m} P_{t} b - J\alpha$ .
	
	\smallskip
	
	Note that, $P(w)\geq0 \iff d_{4}\geq0 \wedge d_{6}\geq0$ as can be proven by rearranging $P(w)$ as $w^4(d_{4}+d_{6}w^2)$ and following the same reasoning as in the previous case. Thus, the necessary and sufficient conditions for the passivity of system whose closed-loop impedance is characterized by Eqn.~(\ref{pure_spring}) are as follows
	{\small \begin{eqnarray}
		\xi=K[\alpha (P_{m}+b)(K+\gamma)- I_{m}I_{t}(P_{m}+b)^2 -J K\alpha^2]>0 \nonumber \\
		d_{4}=K[(K-K_{d})\beta-\alpha KK_{d}]\geq0 \label{ineqn:d4s}\\
		d_{6}=K[(K-K_{d})\eta+K(P_{m}+b)]\geq0 \label{ineqn:d6}
		\end{eqnarray}}
	Eqns.~(\ref{ineqn:d4s}) and (\ref{ineqn:d6}) stipulate some bounds on the renderable virtual stiffness.
	From Eqn.~(\ref{ineqn:d4s}), we get the following upper bound for the renderable stiffness if $\beta + \alpha K$ is positive.
	\begin{equation}
	K_{d}\leq K\frac{\beta}{\beta+\alpha K}<K
	\label{spring_bound}
	\end{equation}
	Inequality~(\ref{spring_bound}) puts an upper bound on the physical damping. If $\beta$ is negative, but $\beta + \alpha K$ is positive, then~ Eqn.~(\ref{spring_bound}) states that one cannot render a spring of any stiffness, since the maximum value for $K_d$ would be a negative number. To ensure that $\beta>0$, we need to employ the same bound on damping found in Eqn.~(\ref{bound_damping}). However, particular attention must be paid when $\beta + \alpha K$ is negative (in which case $\beta$ is automatically negative). In this case, the controlled system becomes unstable as will be shown later. For the time being, we continue the analysis with the assumption of positive $\beta$ (and hence positive $\beta+ \alpha K$).\par
	From Eqn.~(\ref{ineqn:d6}), we get the following upper bound for the renderable stiffness.
	\begin{equation}
	K_{d} \leq K\frac{\eta+P_{m}+b}{\eta}
	\label{spring_bound_d6}
	\end{equation}
	
	Clearly, the value of $K_{d}$ that satisfies Eqn.~(\ref{spring_bound}) also satisfies the less constraining inequality in Eqn.~(\ref{spring_bound_d6}). Inequality in Eqn.~(\ref{spring_bound}) shows that if passivity is desired under the cascaded control architecture, the rendered stiffness must be strictly less than the stiffness of the physical spring employed in the SEA plant, which was originally reported in~\cite{Vallery2008} excluding the damping term. Thus, the maximum value of the desired stiffness can be set to

	{\small \begin{eqnarray}	
		K_{d}^{max}&=&K\frac{\beta}{\beta+\alpha K} \nonumber\\
		&=& K\frac{P_{t}I_{m}^2-bI_{m}I_{t}}{P_{t}I_{m}^2-bI_{m}I_{t}+K( P_{m} I_{t} + P_{t}I_{m})}
		\label{kd_max}
		\end{eqnarray}}
	\begin{prop}{\rm
			The necessary and sufficient conditions to passively render a virtual spring for the system in Fig.~\ref{SEA_diagram} with positive control gains are
			{\small \begin{gather}
				\!\!\!\!\!\! \left[ J \!\!<\!\! \frac{(P_{m}+b)(\Delta K\:P_{m}\:P_{t}+K)}{\alpha \Delta K} \label{cond:spring1}
				\wedge
				b < \frac{P_{t}\:I_{m}}{I_{t}}
				\wedge
				K_{d}\!\!\leq K_{d}^{max}  \right] \\
				\lor \nonumber \\
				\!\!\!\!\!\! \left[ J \!\!\leq \!\! \frac{(P_{m}+b)(\Delta K\:P_{m}\:P_{t}+K)}{\alpha \Delta K}
				\wedge
				b < \frac{P_{t}\:I_{m}}{I_{t}}
				\wedge
				K_{d}\!\!< K_{d}^{max} \right] \label{cond:spring2}
				\end{gather}}
			\noindent	where $\Delta K:=K-K_{d}$ and $K_{d}^{max}$ is as in Eqn.~(\ref{kd_max}).}
	\end{prop}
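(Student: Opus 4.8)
The plan is to follow the same three-step program used for Proposition~2. First, passivity is reduced to the triple $\xi>0$, $d_{4}\geq0$, $d_{6}\geq0$; this has already been carried out in the excerpt (Eqns.~(\ref{ineqn:d4s})--(\ref{ineqn:d6}), together with the residue computation $\res_{s=0}Z_{out}^{spr}=\frac{K_{d}^{2}}{K}>0$, which discharges Condition~(iii) and, once $\xi>0$, rules out any other imaginary-axis pole). The structural observation that makes everything work is that $\xi$ is \emph{exactly} the null-impedance expression of Eqn.~(\ref{eqn:stability}) and carries no dependence on $K_{d}$; using the identity $\alpha I_{m}-I_{m}I_{t}(P_{m}+b)=\beta$ already exploited in the null analysis, I would rewrite it as the affine, strictly decreasing function of the motor inertia
\[
\xi(J)=K^{2}\alpha^{2}\bigl(J^{null}_{max}-J\bigr)+K(P_{m}+b)\,\beta,
\]
with $J^{null}_{max}$ as in Eqn.~(\ref{bound_inertia}) and $\beta=I_{m}(P_{t}I_{m}-bI_{t})$ the quantity appearing in $d_{4}$.

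Second, I would translate the two positive-realness inequalities into explicit parameter bounds, working first in the ``regular'' regime $\Delta K:=K-K_{d}>0$ and $\beta>0$ (equivalently $K_{d}<K$ and, by the bound of Eqn.~(\ref{bound_damping}), $b<P_{t}I_{m}/I_{t}$). Expanding $d_{4}=K[\Delta K\,\beta-\alpha K K_{d}]$ yields $d_{4}\geq0\iff K_{d}\leq K_{d}^{max}$ with $K_{d}^{max}$ as in Eqn.~(\ref{kd_max}) (and automatically $K_{d}^{max}<K$), while $d_{4}>0\iff K_{d}<K_{d}^{max}$. Rewriting $d_{6}=K[(P_{m}+b)(\Delta K\,P_{m}P_{t}+K)-\Delta K\,J\alpha]$ and dividing out the positive factor $K$ and then the positive number $\Delta K\alpha$ yields $d_{6}\geq0\iff J\leq\hat J$ and $d_{6}>0\iff J<\hat J$, where $\hat J:=\dfrac{(P_{m}+b)(\Delta K\,P_{m}P_{t}+K)}{\alpha\,\Delta K}$ is exactly the inertia bound appearing in Eqns.~(\ref{cond:spring1})--(\ref{cond:spring2}).

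Third comes the analogue of Lemma~1, whose engine is the algebraic identity obtained by substituting the boundary inertia $J=\hat J$ into $\xi(J)$ and simplifying,
\[
\xi\big|_{J=\hat J}=\frac{P_{m}+b}{\Delta K}\,d_{4},
\]
so this boundary value of $\xi$ has the same sign as $d_{4}$ --- precisely as substituting $J=J^{null}_{max}$ did in the proof of Proposition~2. Since $\xi(J)$ is strictly decreasing in $J$, this gives at once $d_{4}\geq0\wedge d_{6}>0\Rightarrow\xi(J)>\xi|_{J=\hat J}\geq0$; $d_{4}>0\wedge d_{6}\geq0\Rightarrow\xi(J)\geq\xi|_{J=\hat J}>0$; and $d_{4}=d_{6}=0\Rightarrow\xi=0$, i.e.\ the system is not (isolated) stable. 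Hence $\xi>0$ imposes nothing beyond $d_{4}\geq0\wedge d_{6}\geq0$ except that the two may not vanish simultaneously, so the solution set of the three passivity conditions is $(d_{4}>0\wedge d_{6}\geq0)\lor(d_{4}\geq0\wedge d_{6}>0)$; inserting the step-two translations (together with $b<P_{t}I_{m}/I_{t}$) turns the first disjunct into Eqn.~(\ref{cond:spring2}) and the second into Eqn.~(\ref{cond:spring1}).

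The step I expect to be the main obstacle is closing the equivalence \emph{outside} the regular regime, i.e.\ when $K_{d}\geq K$ or $\beta\leq0$; this is also where the instability claim postponed after Eqn.~(\ref{spring_bound}) gets discharged. If $\Delta K>0$ but $\beta\leq0$, or if $\Delta K=0$, one sees directly that $d_{4}<0$, so Condition~(ii) fails, and in each such case the right-hand side of the Proposition is already false (either $b<P_{t}I_{m}/I_{t}$ or $K_{d}\leq K_{d}^{max}<K$ is violated), so the equivalence holds trivially. The only genuinely delicate sub-case is $\Delta K<0$ with $\beta+\alpha K<0$ (which forces $\beta<0$, hence $b>P_{t}I_{m}/I_{t}$), where $d_{4}\geq0$ can hold; here the same identity does the work, since with $\Delta K<0$ one has $\xi|_{J=\hat J}=\frac{P_{m}+b}{\Delta K}\,d_{4}\leq0$, while $d_{6}\geq0$ now reads $J\geq\hat J$ (dividing by $\Delta K<0$ flips the inequality), so $\xi(J)\leq\xi|_{J=\hat J}\leq0$ and Condition~(i) fails --- the controlled system is indeed unstable there. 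Since $b<P_{t}I_{m}/I_{t}$ is violated throughout this regime, the Proposition's conditions also fail, and the equivalence is complete. What remains is the routine verification of the two polynomial expansions of $d_{4}$ and $d_{6}$ and of the boundary identity for $\xi$.
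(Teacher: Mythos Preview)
Your proposal is correct and follows the same three-step architecture as the paper (reduce passivity to $\xi>0\wedge d_4\ge0\wedge d_6\ge0$, translate $d_4$ and $d_6$ into the explicit bounds on $K_d$ and $J$, then show that the stability condition is redundant except at the simultaneous boundary). Where you differ is in the engine of the redundancy step: the paper first specializes to $K_d=K_d^{max}$, computes $J_{max}^{spr}$ at that point (its Eqn.~(\ref{bound_inertia2})), substitutes into $\xi$ to obtain exactly zero, and then appeals to ``similar arguments as in the null impedance case'' to cover the full disjunction. You instead keep $K_d$ arbitrary and establish the single identity $\xi\big|_{J=\hat J}=\dfrac{P_m+b}{\Delta K}\,d_4$, together with the affine-in-$J$ form $\xi(J)=K^2\alpha^2(J^{null}_{max}-J)+K(P_m+b)\beta$. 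This is a genuinely sharper tool: it yields both parts of the Lemma-1 analogue in one stroke via monotonicity, reproduces the paper's $\xi=0$ boundary as the special case $d_4=0$, and --- because the identity persists for $\Delta K<0$ with the sign of the prefactor flipped --- lets you dispatch the $\beta+\alpha K<0$ instability claim by the same mechanism rather than by the separate argument the paper gives after Eqn.~(\ref{falan}). Your case breakdown outside the regular regime is also more explicit than the paper's; the only cosmetic gap is that you do not spell out the sub-case $\Delta K<0$ with $\beta+\alpha K\ge0$, but there $d_4=K[K\beta-K_d(\beta+\alpha K)]<0$ directly (since $\beta<0$ or $K_d>K\ge K\beta/(\beta+\alpha K)$), so it folds into your ``$d_4<0$, Condition~(ii) fails'' branch.
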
 \label{[prop:cond:spring}
	
	\begin{proof}
		From Eqn.~(\ref{eqn:d6}),
		\begin{equation}
		d_{6}=\Delta K(P_{m}+b)(\Delta K\:P_{m}\:P_{t}+K)-\Delta K\:J\: \alpha \geq0
		\label{eqn:d6_d4}
		\end{equation}
		
		Eqn.~(\ref{eqn:d6_d4}) introduces an upper bound on the motor inertia~$J$ as
		\begin{equation}
		J\leq \frac{(P_{m}+b)(\Delta K\:P_{m}\:P_{t}+K)}{\alpha \Delta K}=J_{max}^{spr}
		\label{bound_inertia_spring}
		\end{equation}
		
		Note that $J_{max}^{spr}$ is not only a function of control gains, but also a function of the desired stiffness $K_{d}$ to be rendered. If we set $K_{d}$ to its maximum allowable value given in Eqn.~(\ref{kd_max}), $J_{max}^{spr}$ reads as
		\begin{equation}
		J_{max}^{spr} = \frac{(P_{m}+b)(P_{t}I_{m}^2+\alpha K(1+P_{m}P_{t})-bI_{m}I_{t})}{\alpha^2 K}
		\label{bound_inertia2}
		\end{equation}
		Substituting Eqn.~(\ref{bound_inertia2}) into Eqn.~(\ref{eqn:stability}) yields $\xi=0$, which implies instability. Hence, when $d_{4}$ and $d_{6}$ are simultaneously zero, the system is not stable. Following the similar arguments as in the null impedance case, it can be proven that $(d_{4}\geq0 \wedge d_{6}>0) \lor (d_{4}>0 \wedge d_{6}\geq0) \implies \xi>0$; hence, the conditions in Eqn.~(\ref{cond:spring1}) or  Eqn.~(\ref{cond:spring2}) hold.
	\end{proof}
	
	Now let us analyse the case when $\beta + \alpha K<0$ for completeness. In this case, Eqn.~(\ref{spring_bound}) modifies to
	
	\begin{equation}
	K_{d}\geq K\frac{\beta}{\beta+\alpha K}>0
	\label{spring_bound2}
	\end{equation}
	
	Here,  Eqn.~(\ref{spring_bound2}) introduces a lower bound on the renderable stiffness. In other words, following inequalities must satisfied to ensure $d_4\geq0 \wedge d_6\geq0$.
	
	\begin{equation}
	K\frac{\beta}{\beta+\alpha K} \leq K_d \leq K\frac{\eta+P_{m}+b}{\eta}
	\label{falan}
	\end{equation}
	
	However, considering Eqns.~(\ref{bound_inertia_spring}) and~(\ref{eqn:stability}),  $K_d$ values in this range will result in $\xi \leq 0$ which implies instability.
	
	\begin{remark}{\rm
			\begin{itemize}
				\item[]
				\item[-] While deriving the passivity conditions, positive controller gains are considered, since negative gains are hardly used in practice and make the analysis much harder to follow.
				
				\item[-] It should be pointed out that the integral gains $I_{m}$ and $I_{t}$ may assume zero values. A naive interpretation of Proposition~2 might lead to a misconclusion that passivity is lost when no velocity integral gain is employed (i.e., $I_{m}=0$), since there will always be some damping $b$ present in the plant. However, since these conditions are derived for positive control gains, the analysis needs to be extended to include zero gains. In particular, each integrator increases the degree of the system by one.  In the case of null impedance, when no integrators are employed (i.e., $I_{m}=I_{t}=0)$, the output impedance is a second order system that is unconditionally passive.
				
				\item[-] Table I reports the necessary and sufficient conditions for ensuring passivity when null impedance is rendered with only one integral gain. Note that, the direct dependence on $b$ for passivity vanishes in these cases.
				
				\begin{table}[h] 
					\begin{center}
						\caption{The necessary and sufficient conditions for passivity when one integrator gain is set to zero}
						\begin{tabular}{l|l}        				
							$I_{m}=0$ & $J<J_{max}^{null}|_{I_{m}=0}$ \\
							\hline
							$I_{t}=0$ & $J\leq J_{max}^{null}|_{I_{t}=0}$ \\
						\end{tabular}
					\end{center} 
				\end{table}
				
				\item[-] In the case of a pure spring, when $I_{m}=0$, the system cannot passively render a virtual spring of any stiffness. This is surprising in that usually integrators are known to jeopardize passivity, but in this case, a minimum amount of integral gain is necessary to render an impedance passively. When only $I_{t}=0$, Proposition~3 remains valid.
				
				\item[-] Note that null impedance is mathematically equivalent to zero virtual stiffness. Consequently, if $K_{d}$ is set to zero, Proposition~3 reduces to Proposition~2.
		\end{itemize}}
	\end{remark}

	\section{Analysis of Rendering Performance}
	
	In this section, the effect of the controller gains on the system response is analyzed through Bode plots. Visualization of passivity through Bode plots is convenient, as passivity implies that the phase is restricted to the interval $[-90^\circ, 90^\circ]$. If the phase goes anywhere beyond this interval at any frequency, then the system is not passive.
	
	Since PI controllers are employed for both the inner velocity and the outer torque control loops, there are four controller parameters to choose namely, $P_{m},P_{t}, I_{m}$, and $I_{t}$. Firstly, Bode plots are drawn with respect to the changes in a certain controller gain, (e.g., $P_{m}$ or $I_{t}$) while keeping the other three gains constant, so as to analyze the effect of each individual parameter on the behaviour of the system. Next, design guidelines are outlined to choose the controller gains that render the system passive, while exhibiting good performance for haptic impedance rendering. Realistic values for the SEA plant parameters used in all simulations are reported in Table II.
	\begin{table}[h!]
		\begin{center}
			\caption{Physical parameters considered for the SEA plant}
			\begin{tabular}{l|l}
				\hline
				\multicolumn{2}{c}{Mechanical Parameters of SEA} \\
				\hline
				J & 0.2  kg m$^2$ \\
				b & 3 Nm s/rad \\
				K & 250 Nm/rad  \\
				\hline
			\end{tabular} \vspace{-1.5\baselineskip}
		\end{center}
	\end{table} \label{Tab:sea_parameters}

	\subsection{Effects of Controller Gains on Null Impedance Rendering} \label{Sec:null_imp}
	
	In this section, we analyze the effect of each controller gain in the case of null impedance rendering. For each simulation, we start with a base case scenario with certain controller gains reported in Table~III. Then, we increase each gain individually to see its effect on the system response through Bode plots.
	\begin{table}[h!]
		\begin{center}
			\caption{Nominal controller gains to render null impedance}
			\begin{tabular}{l|l}
				\hline
				\multicolumn{2}{c}{Controller Gains} \\ 
				\hline
				$P_{m}$ & 20 Nm s/rad \\
				$P_{t}$ & 5  rad/(s Nm) \\
				$I_{m}$ & 10 Nm/rad  \\
				$I_{t}$ & 5  rad/(s$^2$ Nm)  \\
				\hline
			\end{tabular} \vspace{-1.5\baselineskip}
		\end{center}
	\end{table}\label{Tab:base_contr_gains}
	
	\begin{figure}[b]
		\resizebox{\columnwidth}{!}{\includegraphics{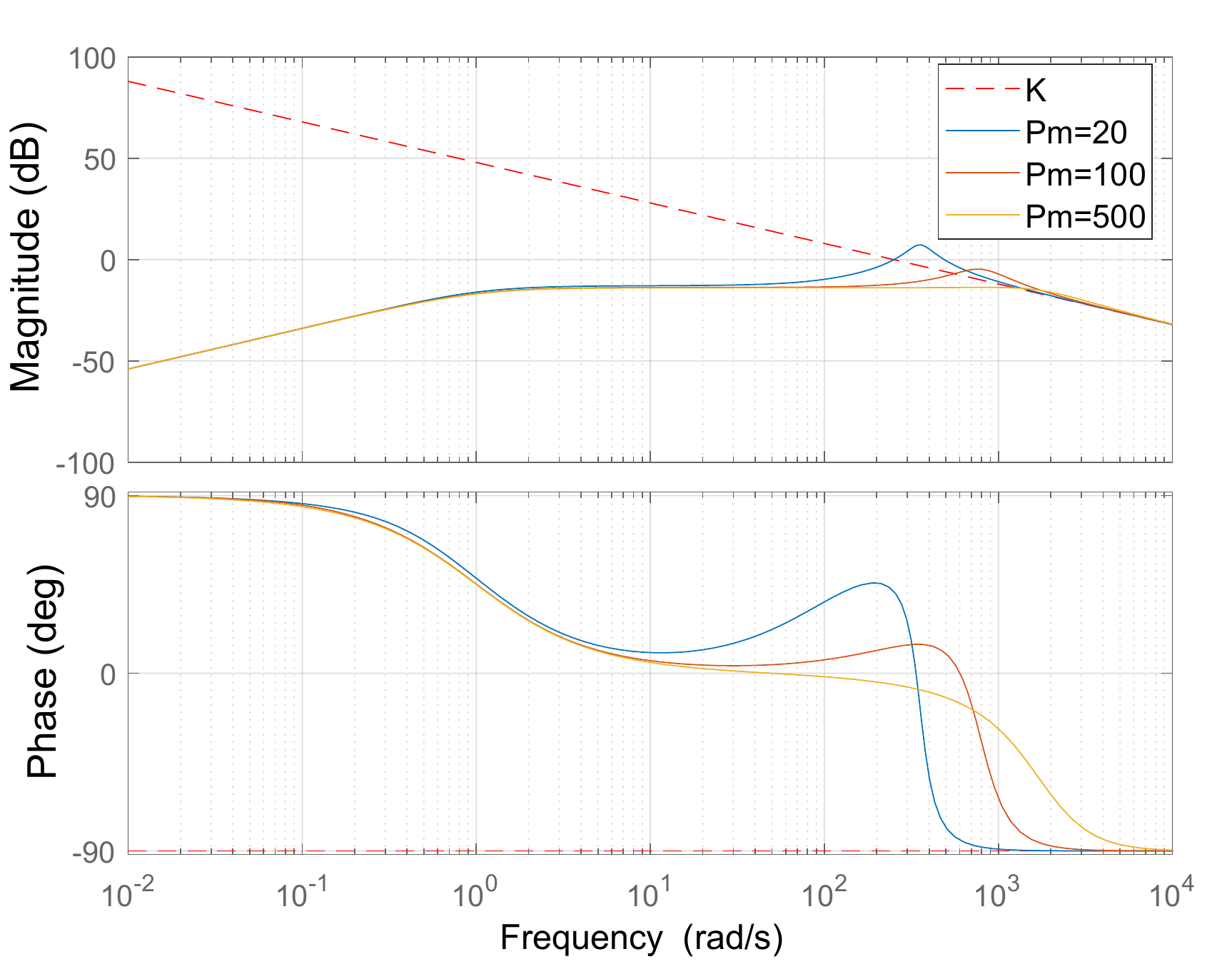}}
		\vspace{-1.25\baselineskip}
		\caption{Null impedance rendering with various velocity proportional gains $P_m$}
		\label{null_Pm}
	\end{figure}
	
	\begin{figure}[t]
		\vspace{-.75\baselineskip}
		\resizebox{\columnwidth}{!}{\includegraphics{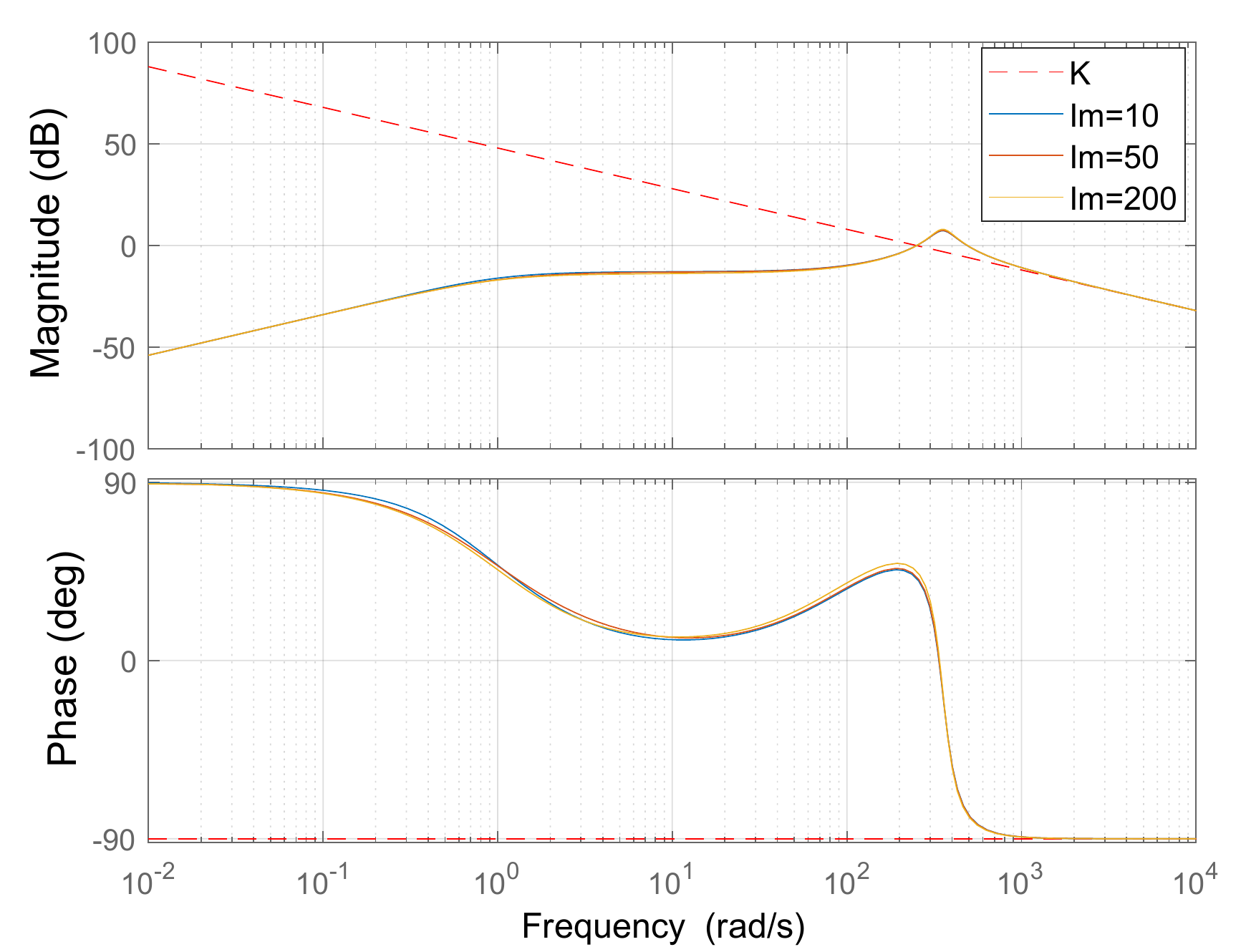}}
		\vspace{-1.5\baselineskip}
		\caption{Null impedance rendering with various velocity integral gains $I_m$}
		\vspace{-1.\baselineskip}
		\label{null_Im}
	\end{figure}
	
	It is observed that the system behaviour may be grouped into three phases. In the first phase, where the input frequency has a low value, the system displays the characteristics of a pure inertia. In the second phase, where the input frequency has an intermediate value, viscous damping behaviour is observed. In the third phase, where the input frequency has a high value, the system response reduces to that of the physical spring employed in the SEA plant. As argued earlier, this is due to the fact that the compliance between the actuator and the load acts as a physical filter against high-frequency force components, which provides safety and robustness against unexpected collisions and impacts.
	
	Figure~\ref{null_Pm} shows the effect of the velocity proportional gain $P_m$. Plots are constructed with different controller gains, and the legend indicates the gain values used for the analysis.
	
	Plots indicate that $P_{m}$ has no significant effect in the first phase (i.e., the inertial zone), but it helps to smooth out the transition from the second phase to the third phase by decreasing the resonant peak at the corresponding cut-off frequency. Theoretically, there exists no bound on $P_{m}$ that causes violation of passivity. However, a practical bound is likely to be imposed by physical bandwidth limitation of the actuator.

	Figure~\ref{null_Im} shows the effect of the velocity integral gain $I_{m}$. Plots indicate that $I_{m}$ has a negligible effect on overall system response. On the other hand, an increase in $I_{m}$ helps to preserve passivity against the actuator damping bound (i.e., $b_{max}$), while too much increase may jeopardize passivity against the actuator inertia bound (i.e., $J_{max}$), as can be seen from Proposition~2.

	\begin{figure}[b]
		\resizebox{\columnwidth}{!}{\includegraphics{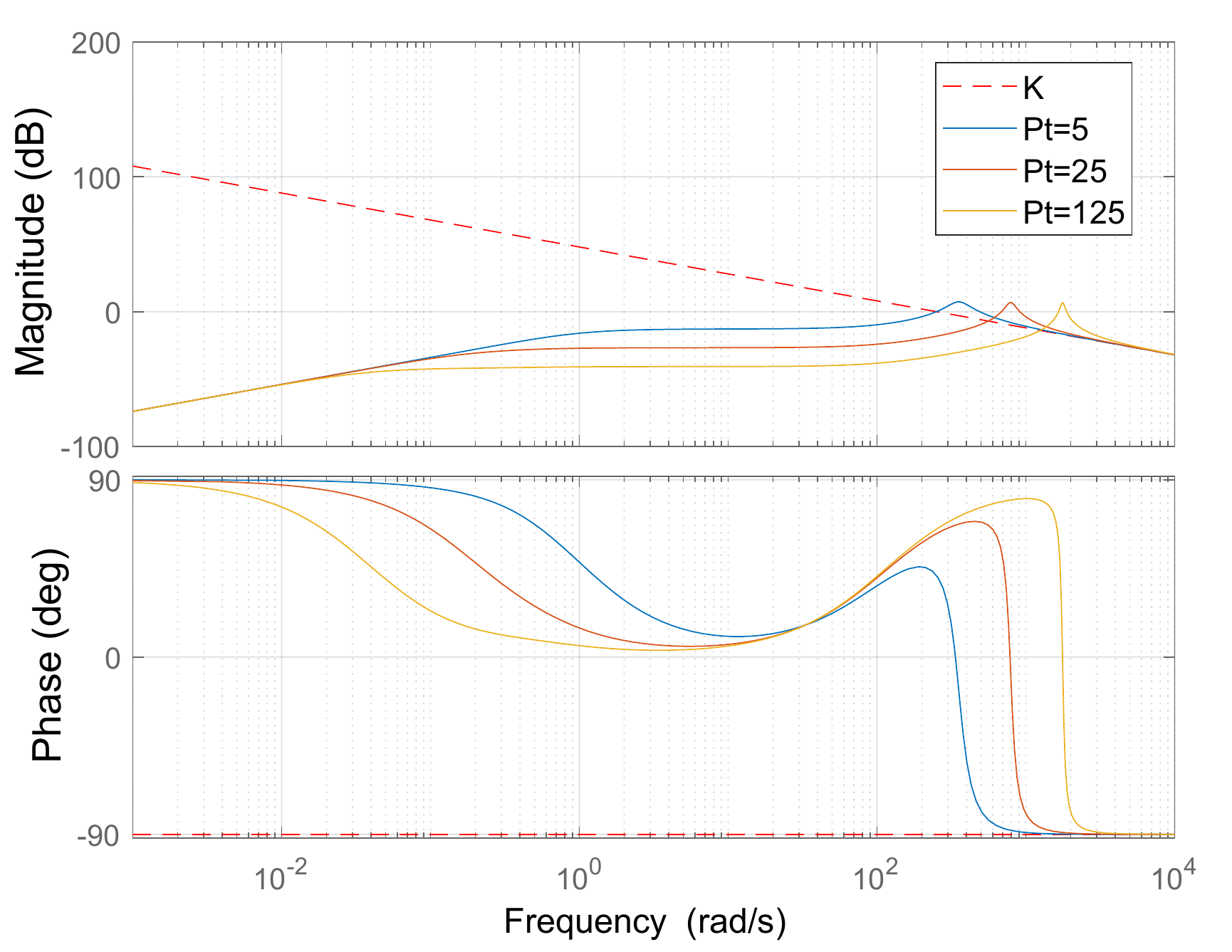}}
		\vspace{-1.5\baselineskip}		
		\caption{Null impedance rendering with various torque proportional gains $P_t$}
		\vspace{-1\baselineskip}		
		\label{null_Pt}
	\end{figure}
	
	\begin{figure}[t]
		\vspace{-1.\baselineskip}
		\resizebox{1\columnwidth}{!}{\includegraphics{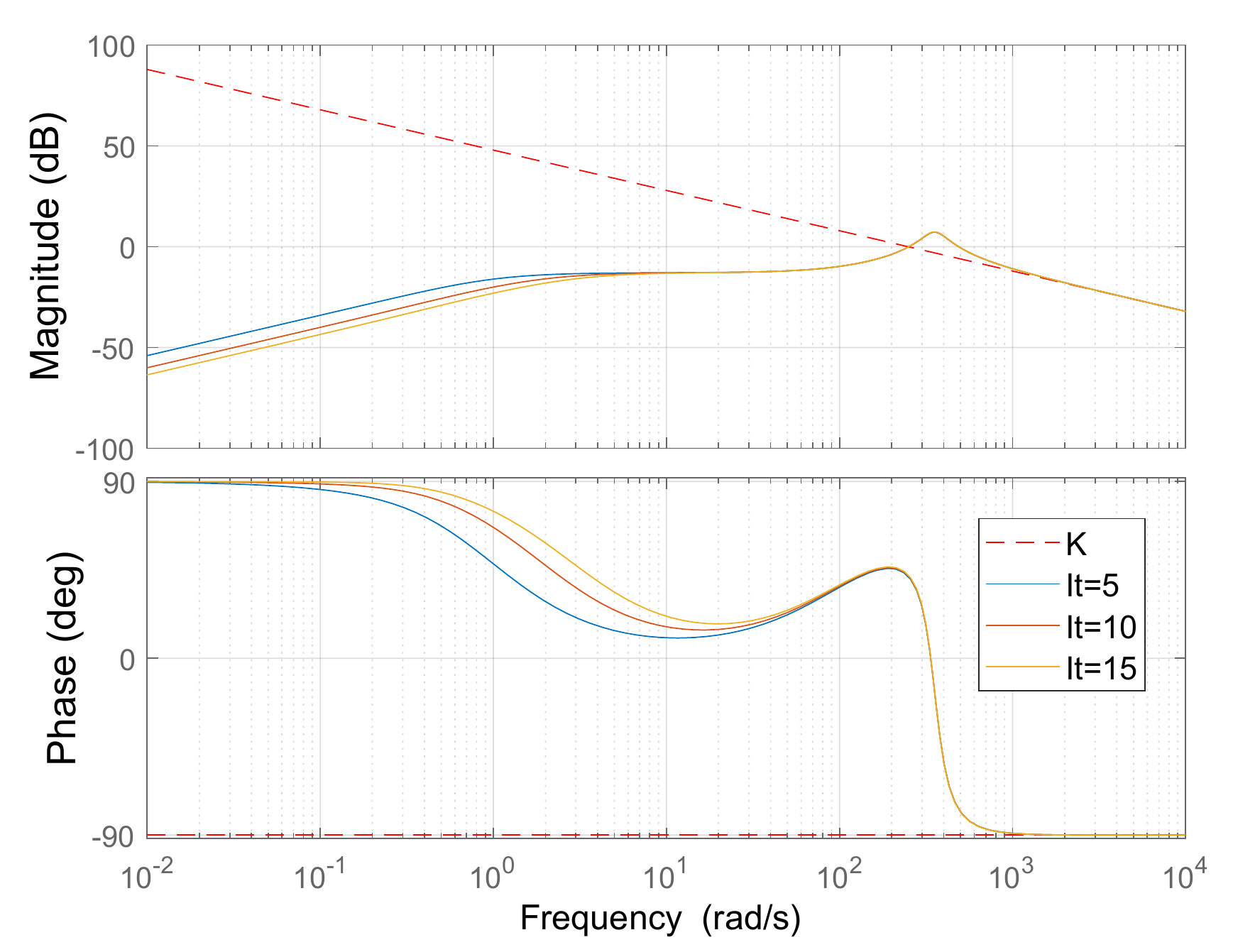}}
		\vspace{-1.5\baselineskip}		
		\caption{Null impedance rendering with various torque integral gains $I_t$}
		\vspace{-1.\baselineskip}
		\label{null_It}
	\end{figure}
	
	Figure~\ref{null_Pt} shows the effect of the torque proportional gain $P_{t}$. Plots indicate that larger values of $P_{t}$ shrink the inertial zone, which may not be favorable. On the other hand, since the system reaches the damping zone earlier, the apparent impedance stays lower for larger $P_t$, as can be inspected from the magnitude plots. Hence, the selection of $P_{t}$ involves a trade-off between the control bandwidth and transparency performance. If the operating frequency of the application is low, then $P_{t}$ may be chosen high.
	
	Figure~\ref{null_It} shows the effect of the torque integral gain $I_t$. Plots indicate that an increase in $I_{t}$ dramatically improves system performance, since not only the inertial zone gets enlarged, but also the apparent inertia is lowered. However, there exists an upper bound on $I_{t}$ due to the passivity conditions given in Proposition~2.
	
	\subsection{Design Guidelines for Null Impedance Rendering}
	
	The analysis shows that the outer torque controller is the main determinant of performance during null impedance rendering. Increasing $I_{t}$ results in better rendering performance by reducing the apparent inertia, as well as widening the inertial zone. While the analysis indicates that the inner velocity controller does not have a significant effect on system response, in practice a fast and robust controller is still desirable to render the actuator as an ideal motion generator under unmodelled parasitic forces.
	
	As $I_{t}$ gets larger, which is desired for better performance, passivity is at stake as can be seen from Eqn.~(\ref{bound_damping}). Hence, relatively aggressive gain values for $P_{m}$ and $I_{m}$ are recommended to design a robust inner motion controller, as well as to preserve passivity without sacrificing good null impedance rendering performance.

	\subsection{Effects of Controller Gains on Pure Stiffness Rendering}
	
	In this section, we analyze the effect of each controller gain while rendering a virtual spring, using a similar approach as in Section~\ref{Sec:null_imp}.
	
	Once again, it is observed that the overall behaviour of the system may be grouped into three phases. In the first phase, the virtual stiffness of the desired value is displayed. In the second phase, damping behaviour is observed. In the third phase, as expected, the system behaviour reduces to that of the physical spring employed in the SEA. The numerical values for system parameters used in simulations are reported in Table~IV.
	
	\begin{table}[b]
		\begin{center}
			\caption{Nominal controller gains to render a pure spring}
			\begin{tabular}{l|l}
				\hline
				\multicolumn{2}{c}{System parameters} \\ 
				\hline
				$P_{m}$ & 20 Nm s/rad \\
				$P_{t}$ & 30 rad/(s Nm) \\
				$I_{m}$ & 100 Nm/rad  \\
				$I_{t}$ & 5  rad/(s$^2$ Nm)  \\
				$K_{d}$ & 50 Nm/rad\\
				\hline
			\end{tabular}
		\end{center}
	\end{table} \label{Tab:spring_params}

	Figure~\ref{spring_Pm} shows the effect of the velocity proportional gain~$P_{m}$.  Plots indicate that $P_{m}$ does not have a significant effect in the first phase, but high values of $P_{m}$ lower the resonant peaks that occur at the phase transitions. Theoretically, there exists no bound on $P_{m}$ that causes passivity violation.
	
	\begin{figure}[t]
		\resizebox{\columnwidth}{!}{\includegraphics{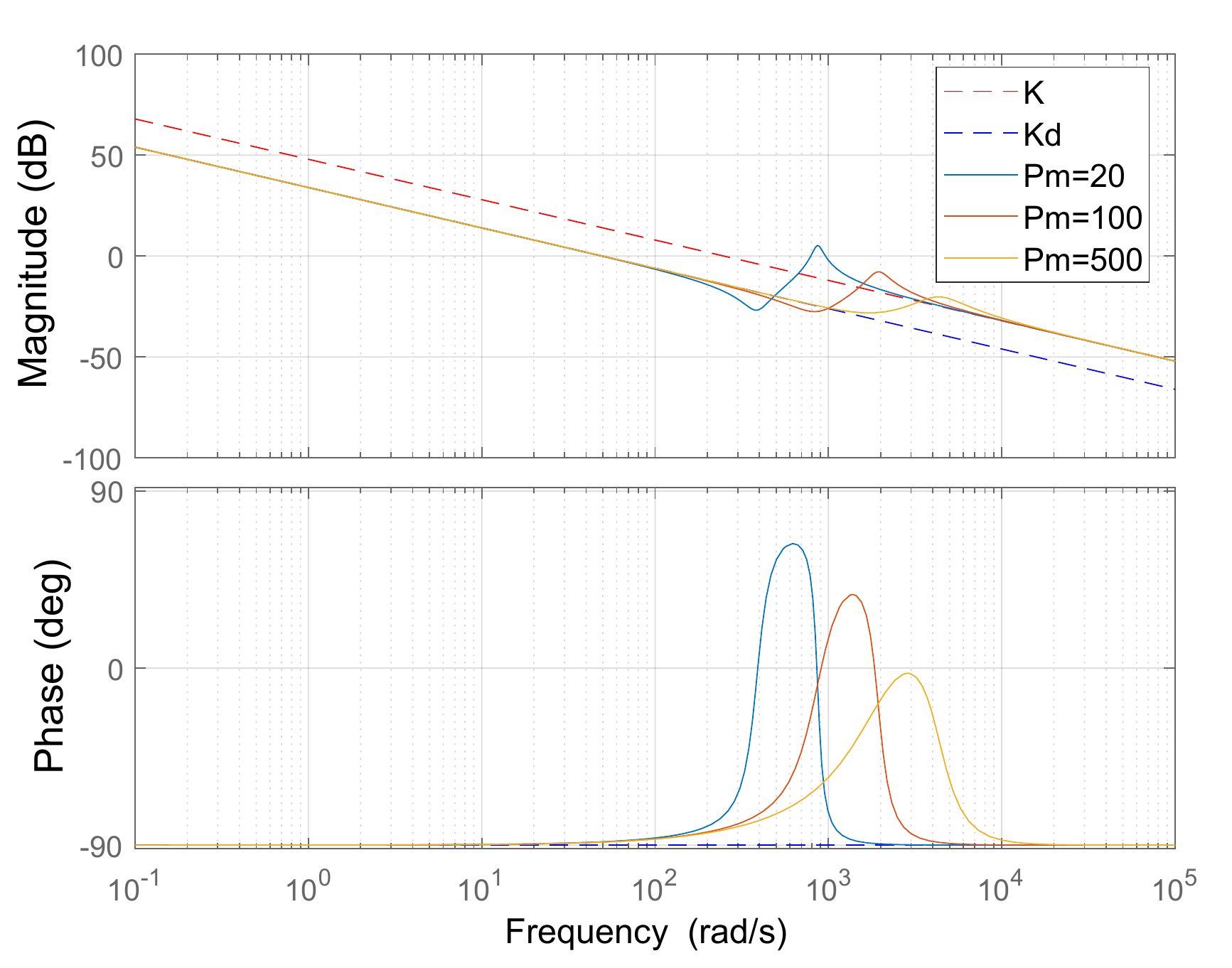}}
		\vspace{-1.5\baselineskip}		
		\caption{Pure spring rendering with various velocity proportional gains~$P_m$}
		\vspace{-1\baselineskip}
		\label{spring_Pm}
	\end{figure}

	\begin{figure}[b]
		\vspace{-1.\baselineskip}	
		\resizebox{\columnwidth}{!}{\includegraphics{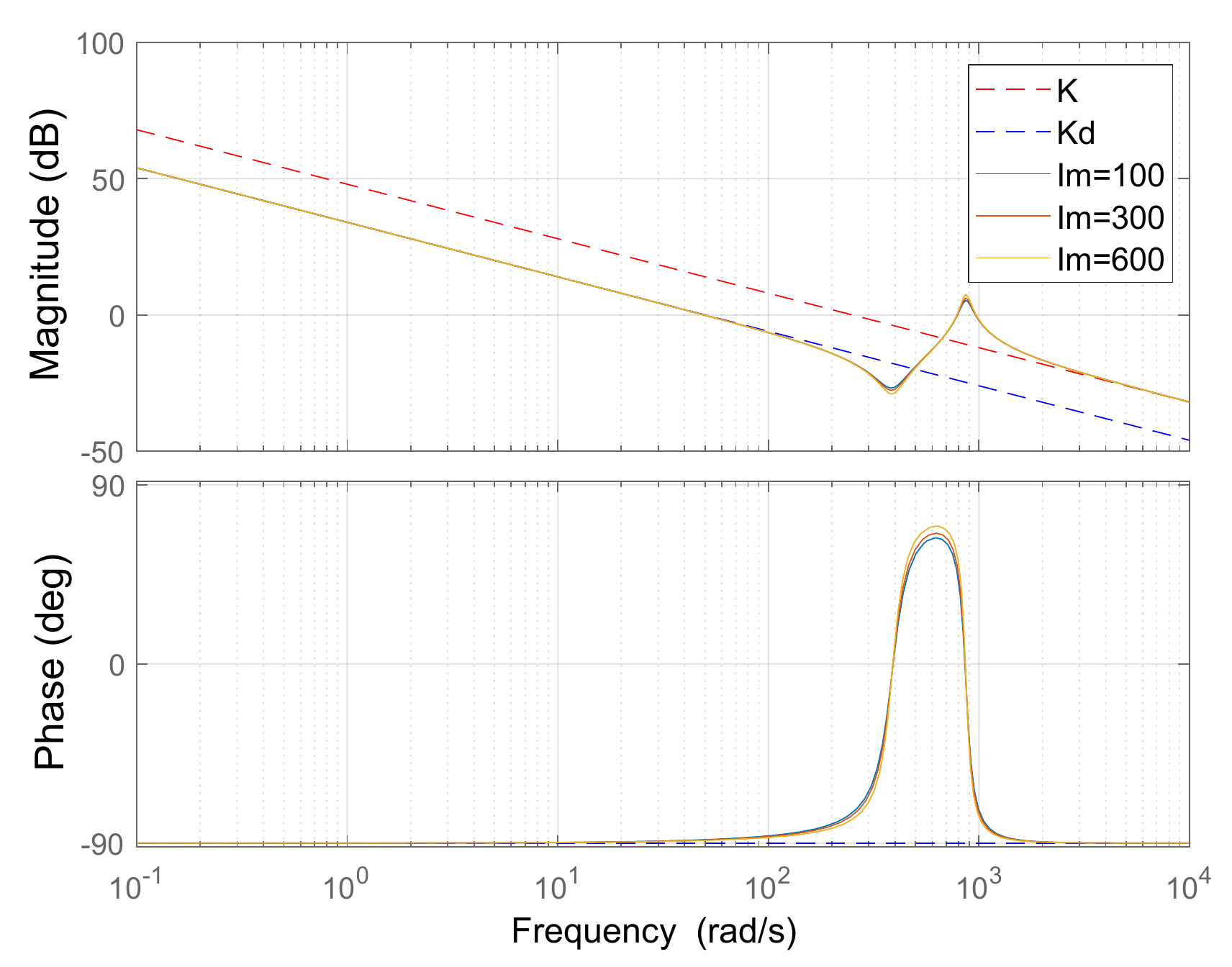}}
		\vspace{-1.5\baselineskip}		
		\caption{Pure spring rendering with various velocity integral gains~$I_m$}
		\label{spring_Im}
	\end{figure}
	
	Figure~\ref{spring_Im} shows the effect of the velocity integral gain $I_{m}$. Plots indicate that $I_{m}$ does not have a significant effect on the overall system response. However, it is the most critical parameter to determine the maximum renderable stiffness $K_{{d}_{max}}$. It can be seen from Eqn.~(\ref{kd_max}) that $K_{{d}_{max}} \to K$ as $I_{m} \to \infty$. For any other controller gain, this limit goes to a value less than the physical stiffness $K$ of SEA.
	
	\begin{figure}[t]
		\vspace{0.5\baselineskip}	
		\resizebox{\columnwidth}{!}{\includegraphics{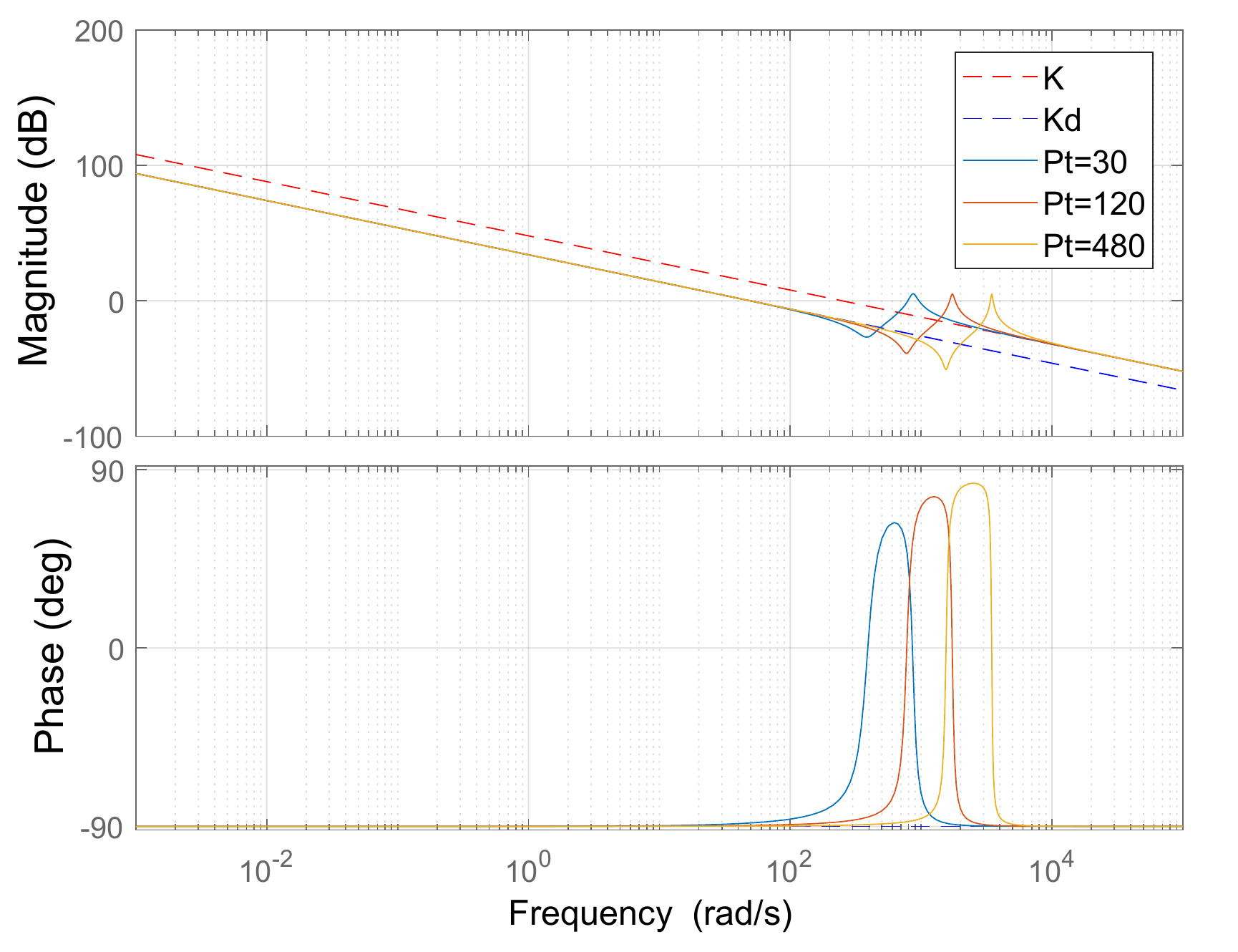}}
		\vspace{-1.5\baselineskip}		
		\caption{Pure spring rendering with various torque proportional gains~$P_t$}
		\vspace{-1.1\baselineskip}
		\label{spring_Pt}
	\end{figure}
	
	Figure~\ref{spring_Pt} shows the effect of the torque proportional gain $P_{t}$. Plots indicate that increasing $P_{t}$ provides better performance, since the desired stiffness is successfully rendered for a wider frequency range. However, on the downside, it also increases the resonant peaks at the phase transitions.
	
	\begin{figure}[b]
		\vspace{-1.\baselineskip}	
		\resizebox{\columnwidth}{!}{\includegraphics{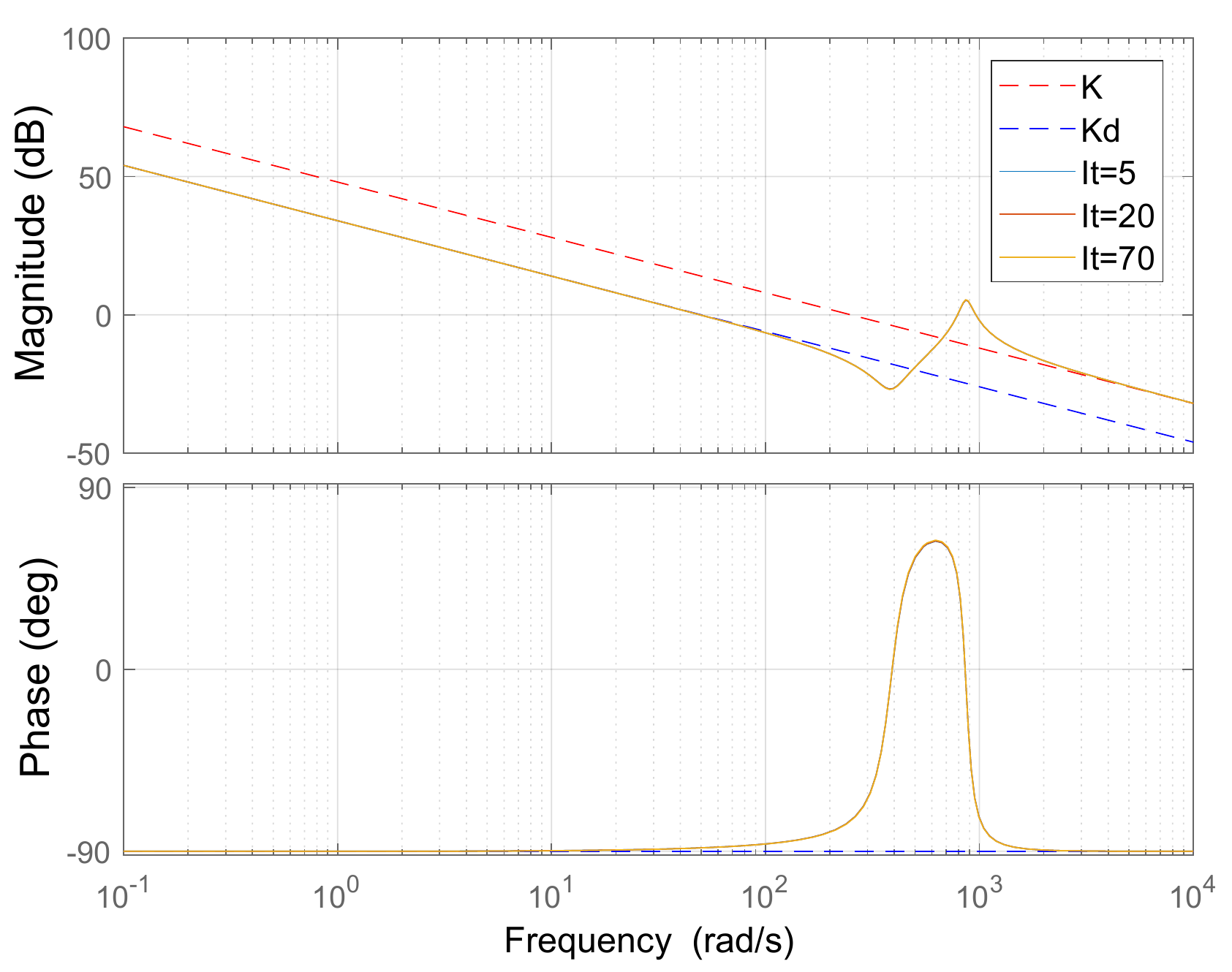}}
		\vspace{-1.5\baselineskip}		
		\caption{Pure spring rendering with various torque integral gains~$I_t$}
		\label{spring_It}
	\end{figure}
	
	Figure~\ref{spring_It} shows the effect of the torque integral gain $I_t$. Plots indicate that it does not significantly affect the system response. Moreover, large values of $I_{t}$ jeopardize passivity, as can be seen from Proposition~3. If $I_{t}$ is set to zero, the value of $I_{m}$ must be set to
	$I_{m}\geq KK_{d}/\Delta K$, in order to be able to display desired stiffness $K_{d}$, as can be seen from Eqn.~(\ref{spring_bound}). Along these lines, while it is theoretically alluring to set $I_{t}$ to zero, while a small $I_t$ may be preffered in practical implementations to eliminate steady-state errors.

	\subsection{Design Guidelines for Pure Stiffness Rendering}
	
	The analysis indicates that $P_{t}$ is the main determinant of the performance during pure stiffness rendering, since the frequency up to which the desired stiffness is successfully displayed is directly related to $P_{t}$. On the other hand, a sufficiently large value of $I_{m}$ must be employed to ensure that desired stiffness can be passively rendered. Hence, $I_{m}$ plays a crucial role in determining the K-width of the system.
	
	A high value of $P_{m}$ is also preferred as it smoothens the transitions between phases. Furthermore, an increase in $P_{m}$ helps to preserve passivity according to Proposition~3.
	
	$I_{t}$  does not have a significant effect on rendering performance. Moreover, it has an adverse effect on preserving passivity. A small value of $I_{t}$  may be injected into the system to eliminate the steady state errors due to constant disturbances, such as parasitic forces due to stiction.  P control may be applied at the outer torque controller, if the system does not suffer from undesirable steady-state response.
	
	\subsection{Overall Design Guidelines}
	
	Many applications in pHRI require frequent switching between active backdrivability (null impedance rendering) and virtual fixtures (pure spring rendering), such as haptic virtual environments that contain a unilateral constraints~\cite{Gillespie2017}. Hence, it is desirable to determine a single set of controller gains that ensures passivity for both impedance models. Luckily, the passivity bounds on controller gains while rendering for both impedances are not in conflict with each other. Hence, to design a passive controller that has the good  performance both impedances, the bounds provided in Proposition~2 and~3 can be used to adjust the controller gains (especially the integrator gains) sufficiently high to meet the the specifications of the application without jeopardizing passivity.

	\section{Discussion}
	
	Passivity is one of the most widely adopted paradigms to ensure coupled stability in pHRI applications. While this framework provides stability robustness, it imposes conservative conditions on the controller design that degrades the performance, as also demonstrated in the previous section. Therefore, it is of paramount importance to find the most relaxed passivity conditions for a system at hand. However, deriving non-conservative passivity conditions proves to be a challenging problem in most cases. As a consequence, \emph{sufficient} but not necessary passivity conditions are commonly employed, which impose further limitations in addition to the inherent conservativeness of the frequency domain passivity framework.
	
	In this paper, we have rigorously derived the necessary and sufficient conditions to passively render two widely adopted impedance models of zero impedance and pure stiffness under the prominent cascaded velocity-force control architecture. These results provide the least conservative bounds for all positive controller gains.
	
	\begin{table}[b]
		\begin{center}
			\caption{Design Guidelines for Rendering Null Impedance}
			\vspace{-.5\baselineskip}
			\resizebox{.9\columnwidth}{!}{\begin{tabular}{l|l}
					\hline
					Vallery~\emph{et al.}~\cite{Vallery2007,Vallery2008}  & $P_{m}>J \wedge P_{m}>2I_{m} \wedge P_{t}>2I_{t}$  \\
					\hline
					&
					\\    		
					Accoto~\emph{et al.}~\cite{Tagliamonte2014} & $J<$ {\large $ \frac{(P_{m}+b)(P_{m}\:P_{t})} {P_{m}\:I_{t}+P_{t}\:I_{m}}$} $\wedge$ {\large $b<\frac{P_{t}I_{m}}{I_{t}}$}
					\\
					&
					\\    			
					\hline
					&
					\\
					Ours & $J<$  {\large $\frac{(P_{m}+b)(1+\:P_{m}\:P_{t})}{P_{m}\:I_{t}+P_{t}\:I_{m}} $} $\wedge$ {\large $b<\frac{P_{t}I_{m}}{I_{t}}$}  \\ 	\\		
					\hline
			\end{tabular}}
		\end{center}
	\end{table}
	
	In particular, Tables~V and~VI report the passivity bounds for the system model in Figure~1 for rendering a null impedance and a virtual spring, respectively. The notations used in~\cite{Vallery2007,Vallery2008,Tagliamonte2014} are mapped to ours to enable easier comparisons. Note that results provided in~\cite{Vallery2007,Vallery2008,Tagliamonte2014} are  sufficient but not necessary conditions. In particular, the bounds reported in~\cite{Vallery2007,Vallery2008} are quite conservative. While the bounds provided in~\cite{Tagliamonte2014} relax the previously established passivity constraints~\cite{Vallery2007,Vallery2008}, these bounds still remain conservative.
	
	The difference between the conditions reported in~\cite{Tagliamonte2014} and our results are relatively small for null impedance rendering case, while it becomes more pronounced for pure spring rendering case. In particular, for null impedance rendering case, the bound on inertia is relaxed by a factor of $(1+1/(P_m I_t))$, while the bound on $b$ stays the same. However, the necessity of the bound on $b$ was proven for the first time in the present work. This allowed us to remark the unexpected adversary effect of physical damping on system passivity, as it unintuitively implies that too much dissipation may violate passivity.

	For the spring rendering case, the bound on $J$ is relaxed by a factor of $1+(K/(P_m P_t \Delta K))$, where $\Delta K \triangleq K - K_d$. Hence, the smaller $\Delta K$ (i.e., the stiffer virtual spring rendered), the less strict the bound on $J$ becomes. Finally, the bound on $K_d$ and $b$ remain the same as it has been reported in the literature \cite{Tagliamonte2014}. Note that the presence of damping not only imposes an additional passivity constraint but also reduces the K-width of the system (i.e., $K_d^{max}$). This has been reported through an inequality plot that shows the inverse relationship between the actuator damping $b$ and the normalized maximum renderable stiffness $K_d^{max}/K$ ~\cite{Tagliamonte2014}.
	
	To maximize the K-width of the system, the velocity integral gain $I_m$ needs to be maximized. Our least conservative bounds allow $I_m$ to attain its maximum value without violating passivity; thus, enlarge the K-width of the system to its theoretical limit.
	
	Another important finding of this study reveals that the presence of damping necessitates an extra passivity constraint. If the actuator is modeled as pure inertia, that is, $b = 0$, the condition in Proposition~2 reduces to
	\begin{equation}
	J_{max}^{null}<\frac{P_{m}(1+P_{m}P_{t})}{P_{m}I_{t}+P_{t}I_{m}}
	\label{undamped}
	\end{equation}
	
	Hence, when physical damping is neglected in the system model such that the actuator is modeled as pure inertia, a necessary condition for passivity is missed. This result is counterintuitive in that increasing damping is typically expected to result in less conservative passivity conditions due to its dissipative nature. However, this intuition fails in the presence of integral controllers and introduction of physical actuator damping into the system model imposes an additional constraint to ensure passivity, instead of relaxing passivity conditions. Therefore, physical damping should not be neglected in the passivity analysis, especially if integrators are utilized. This result surprisingly demonstrates the adversary effect of physical damping on passivity.
	
	\begin{table}[b]
		\begin{center}
			\caption{Design Guidelines for Rendering Virtual Spring}
			\vspace{-.5\baselineskip}
			\resizebox{\columnwidth}{!}{\begin{tabular}{l|l}
					\hline
					Vallery~\emph{et al.}~\cite{Vallery2007,Vallery2008} & $P_{m}>J \wedge P_{m}>2I_{m} \wedge P_{t}>2I_{t} \wedge K_{d} < K_{d}^{max}|_{b=0}$  \\
					\hline
					&
					\\    		
					Accoto~\emph{et al.}~\cite{Tagliamonte2014} & $J<$  {\large $ \frac{(P_{m}+b)(P_{m}\:P_{t})} {P_{m}\:I_{t}+P_{t}\:I_{m}} $} $\wedge$ {\large $b<\frac{P_{t}I_{m}}{I_{t}}$} $\: \wedge \: K_{d}<K_{d}^{max}$
					\\
					&
					\\    			
					\hline
					&
					\\
					Ours & $J<$  {\large $\frac{(P_{m}+b)(\Delta K\:P_{m}\:P_{t}+K)}
						{\Delta K\:(P_{m}\:I_{t}+P_{t}\:I_{m})} $} $\wedge$ {\large $b<\frac{P_{t}I_{m}}{I_{t}}$} $\wedge \: K_{d}<K_{d}^{max}$ (Eqn.~(\ref{kd_max})) \\ 	\\		
					\hline
			\end{tabular}}
		\end{center}
	\end{table}
	
	To emphasize this fact, a numerical example is provided. Assume we have the SEA plant as given in Table~II. Two controllers are suggested: The first controller has been tuned according to Proposition~2, while the second controller has been tuned according to Eqn.~(\ref{undamped}). The numerical values for the control parameters used in the simulation are reported in Table~VII. In Section~IV, it has been shown that larger $I_{t}$ gains provide better rendering performance for null impedance. Hence, the largest possible values with a small safety margin have been chosen for $I_{t}$ for both systems. Note that when the damping is included in the actuator model, the upper bound for $I_{t}$ dramatically decreases because of the additional constraint introduced due to the presence of damping.
	
	Figure~\ref{plot:undamped} presents the Bode plots of these two systems. Note that, both systems are theoretically passive according to their respective actuator models that are with and without damping. However, to test the controllers, damping is included in the simulated actuator model of both systems, since some level of dissipation is always present in physical systems.
	
	\begin{figure}[t]
		\resizebox{\columnwidth}{!}{\includegraphics{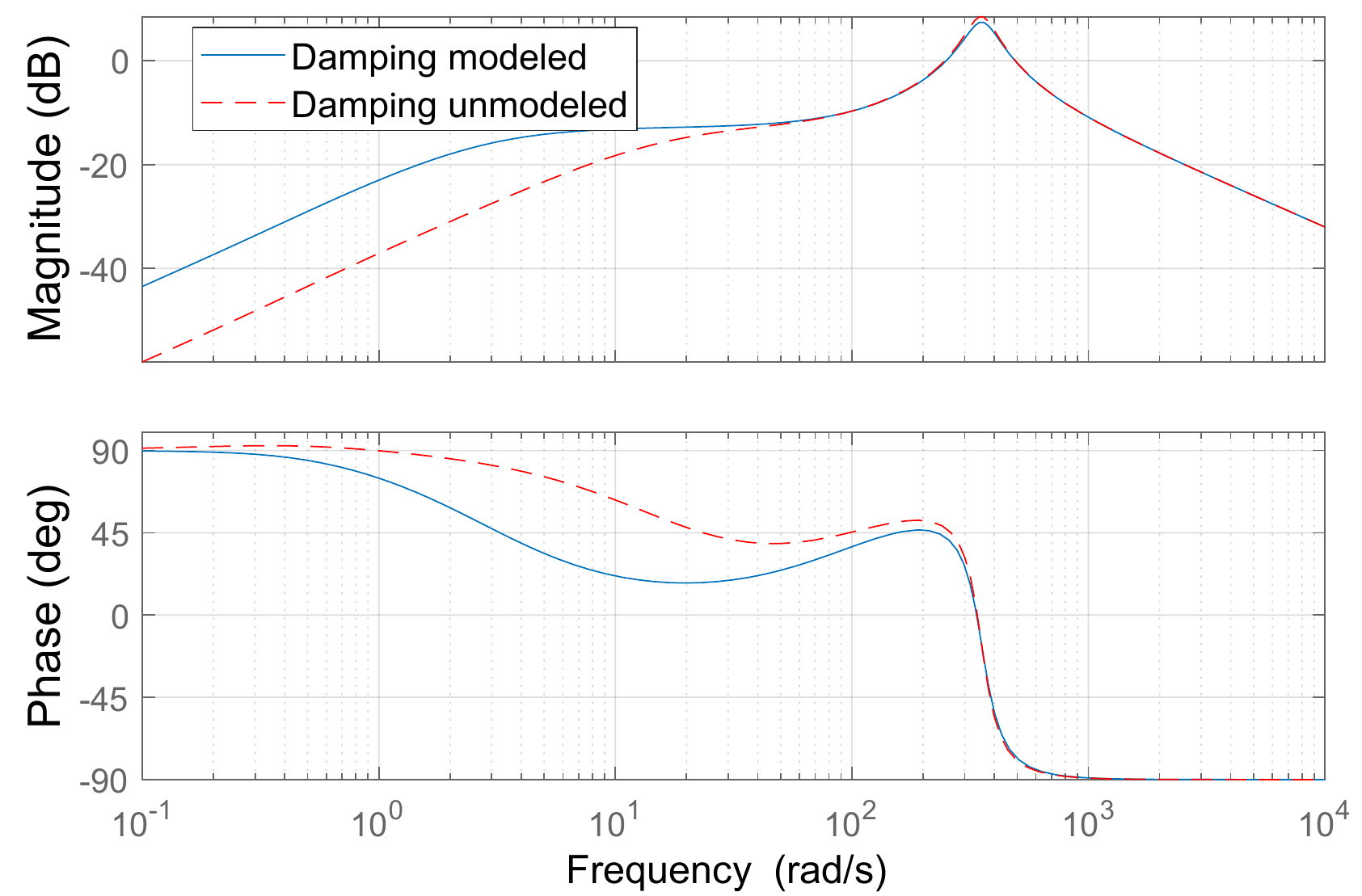}}
		\vspace{-1.5\baselineskip}
		\caption{The effect of damping on passivity}
		\label{plot:undamped}
		\vspace{-1.25\baselineskip}
	\end{figure}
	
	Clearly, the second controller outperforms the first one, but at the expense of passivity. Simulation results indicate that the phase of the second controller passes $90^\circ$  for a range of low frequencies and goes up to $93.5^\circ$. This result serves as a counter-example for the commonly employed assumption that neglecting damping results in more conservative passivity conditions.
	
	
	In fact, similar counter-examples that falsify the presumption that an addition of damping relaxes the passivity bounds have also been noted in the literature. In particular, a numerical parameter space search was used in~\cite{Newman2003} to analyse the passivity of Natural Admittance Control~\cite{Newman1992} and an adversary relationship between the integral control gain and the virtual damping parameters in the presence of physical damping has been noted. Similarly, in~\cite{Willaert2011}, the need for verifying passivity at the upper and lower bounds on the damping parameter has been advocated within the concept of bounded impedance passivity. Our results are in good agreement with these earlier observations and support them by proving the necessity of bounds on integral gains when physical damping of the system is included in the system model.

	\section{Conclusion}
	
	We have presented the necessary and sufficient conditions to ensure the passivity of cascade-control of SEA for rendering null impedance and pure stiffness models. These conditions extend the sufficiency condition reported in the literature~\cite{Vallery2007,Vallery2008,Tagliamonte2014} and relaxing these bounds, serve as the least conservative bounds on renderable impedances under the frequency domain passivity paradigm. Our results also prove the necessity of  a counter-intuitive second bound on integral gains, that has been neglected in the literature. This bound is crucial as it is imposed due to inevitable physical damping of the actuator; hence, cannot be safely neglected if integral controller is used in both the inner  and the intermediate control loops.
	
	While the necessary and sufficient conditions provide the least conservative bounds within the frequency domain passivity paradigm, they may still be conservative. Along these lines, less conservative paradigms, such as time domain passivity~\cite{hannafordRyu,ryuGeneral}, complementary stability~\cite{Buerger2007}, bounded-impedance absolute stability~\cite{Haddadi2010,Willaert2011}, fractional-order passivity~\cite{OzanIros,Yusuf2018} may be utilized to achieve better performance while still ensuring coupled stability of interaction. However, even though they are relatively conservative, frequency domain passivity conditions are valuable as they are known to provide a fundamental understanding of the underlying trade-offs governing the dynamics of the closed-loop system.

	\begin{table}[t]
		\caption{Controller gains for the two models}
		\begin{subtable}{.5\linewidth}
			\centering
			\vspace{-.5\baselineskip}
			\begin{tabular}{l|l}
				\hline
				\multicolumn{2}{c}{First Controller} \\ 
				\hline
				$P_{m}$ & 20 Nm s/rad \\
				$P_{t}$ & 5  rad/(s Nm) \\
				$I_{m}$ & 10 Nm/rad  \\
				$I_{t}$ & 15 rad/(s$^2$ Nm) \\
				\hline
			\end{tabular}
		\end{subtable}%
		\begin{subtable}{.5\linewidth}
			\centering
			\vspace{-.5\baselineskip}
			\begin{tabular}{l|l}
				\hline
				\multicolumn{2}{c}{Second Controller} \\ 
				\hline
				$P_{m}$ & 20 Nm s/rad \\
				$P_{t}$ & 5  rad/(s Nm) \\
				$I_{m}$ & 10 Nm/rad  \\
				$I_{t}$ & 80 rad/(s$^2$ Nm)  \\
				\hline
			\end{tabular}
		\end{subtable}\vspace{-1\baselineskip}
	\end{table} \label{Tab:two_controllers}
	
	\vspace{-.5\baselineskip}
	\bibliographystyle{IEEEtran}
	\bibliography{ref}

\end{document}